\documentclass[11pt, a4paper]{lumia}

\usepackage[sort&compress]{natbib}
\usepackage{xspace}

\theoremstyle{plain}
\newtheorem{theorem}{Theorem}
\newtheorem{proposition}[theorem]{Proposition}
\newtheorem*{proposition*}{Proposition}

\theoremstyle{definition}

\theoremstyle{definition}

\def\eqref#1{equation~\ref{#1}}

\usepackage{graphicx}           
\usepackage{tikz}               
\usepackage[edges]{forest}      

\usepackage{url}                
\usepackage{xurl}               

\usepackage{array}              
\usepackage{longtable}          
\usepackage{multirow}           
\usepackage{makecell}           
\usepackage{ragged2e}           

\usepackage{mathtools}          
\usepackage{nicefrac}           

\usepackage{algorithm}          
\usepackage{algorithmicx}       
\usepackage{algpseudocode}      
\usepackage{listings}           

\usepackage{subcaption}         
\usepackage{wrapfig}            
\usepackage[export]{adjustbox}  

\usepackage{changes}            
\usepackage{xspace}             
\usepackage[normalem]{ulem}     
\usepackage{CJKutf8}            

\usepackage[tikz]{bclogo}       
\usepackage[framemethod=tikz]{mdframed} 

\usepackage{lipsum}             
\usepackage{tocloft}            
\usepackage{afterpage}          
\usepackage{bbding}             
\usepackage{epigraph}           
\usepackage{minitoc}            
\usepackage{multicol}           
\usepackage{textgreek}          

\newcolumntype{P}[1]{>{\RaggedRight\arraybackslash}p{#1}}

\definecolor{uclablue}{RGB}{39, 116, 174}
\definecolor{bigaired}{RGB}{156, 0, 0}
\definecolor{myblue}{HTML}{598BE7}
\definecolor{mildblue}{RGB}{31,119,180}
\definecolor{sectionblue}{RGB}{70, 130, 180}
\definecolor{methodblue}{RGB}{0, 150, 136}
\definecolor{bgblue}{RGB}{245,243,253}
\definecolor{ttblue}{RGB}{91,194,224}
\definecolor{mygreen}{rgb}{0.64, 0.56, 0.88}
\definecolor{myyellow}{rgb}{0.68, 0.6, 0.1}
\definecolor{fancygreen}{rgb}{0.33, 0.68, 0.20}
\definecolor{salmon}{rgb}{0.94, 0.52, 0.49}
\definecolor{tablegreen}{rgb}{0.82, 0.94, 0.75}
\definecolor{tableblue}{rgb}{0.81, 0.90, 0.94}
\definecolor{tablered}{rgb}{0.97, 0.85, 0.85}
\definecolor{tableorange}{rgb}{0.96, 0.85, 0.81}
\definecolor{myorange}{rgb}{1.0, 0.49, 0.0}
\definecolor{tlgreen}{rgb}{0.33, 0.68, 0.20}
\definecolor{darkgreen}{RGB}{0,100,0}
\definecolor{darkred}{RGB}{200, 0, 0}
\definecolor{customyellow}{HTML}{FFFACD}
\definecolor{refinegreen}{RGB}{0, 128, 75}
\definecolor{scoregreen}{RGB}{34, 139, 34}
\definecolor{hidden-blue}{RGB}{194,232,247}
\definecolor{hidden-black}{RGB}{20,68,106}
\definecolor{yes}{HTML}{C6EFCE}
\definecolor{no}{HTML}{FFC7CE}
\definecolor{partial}{HTML}{FFEB9C}
\definecolor{external}{HTML}{D9E1F2}
\definecolor{hdr}{HTML}{F2F2F2}
\definecolor{GRPOrow}{gray}{0.96}
\definecolor{FlowRLrow}{RGB}{225,236,255}
\definecolor{FlowBlue}{RGB}{80,120,210}
\definecolor{GRPOGray}{gray}{0.35}

\setlist[itemize]{leftmargin=20pt, noitemsep, topsep=0pt}

\NewDocumentCommand{\kaiyan}{mO{}}{\textcolor{purple}{\textsuperscript{\textit{kaiyan}}\textsf{\textbf{\small[#1]}}}}
\NewDocumentCommand{\yuxin}{mO{}}{\textcolor{cyan}{\textsuperscript{\textit{yuxin}}\textsf{\textbf{\small[#1]}}}}
\NewDocumentCommand{\bx}{mO{}}{\textcolor{green}{\textsuperscript{\textit{bx}}\textsf{\textbf{\small[#1]}}}}
\NewDocumentCommand{\at}{mO{}}{\textcolor{red}{\textsuperscript{\textit{AT}}\textsf{\textbf{\small[#1]}}}}
\NewDocumentCommand{\re}{mO{}}{\textcolor{blue}{\textsuperscript{\textit{RE}}\textsf{\textbf{\small[#1]}}}}
\NewDocumentCommand{\ybsun}{mO{}}{\textcolor{magenta}{\textsuperscript{\textit{youbang}}\textsf{\textbf{\small[#1]}}}}
\NewDocumentCommand{\runze}{mO{}}{\textcolor{orange}{\textsuperscript{\textit{runze}}\textsf{\textbf{\small[#1]}}}}
\NewDocumentCommand{\add}{mO{}}{\textcolor{darkgreen}{\textsuperscript{\textit{Maybe Consider Discuss}}\textsf{\textbf{[#1]}}}}

\newcommand{\cmark}{\textcolor{darkgreen}{\boldmath$\checkmark$}}
\newcommand{\xmark}{\textcolor{darkred}{\boldmath$\times$}}

\newenvironment{itemize*}%
 {\leftmargini=10pt\begin{itemize}%
  \setlength{\itemsep}{0pt}%
  \setlength{\parskip}{0pt}%
  }%
 {\end{itemize}}

\newenvironment{enumerate*}%
 {\begin{enumerate}%
  \setlength{\itemsep}{0pt}%
  \setlength{\parskip}{0pt}}%
 {\end{enumerate}}

\newcommand{\cellstatus}[1]{%
  \begingroup
  \StrTrim{#1}[\statusval]%
  \IfStrEq{\statusval}{Yes}{\cellcolor{yes}\cmark}{}%
  \IfStrEq{\statusval}{No}{\cellcolor{no}\xmark}{}%
  \IfBeginWith{\statusval}{Yes (}{\cellcolor{yes}\cmark~\textit{\statusval\unskip}}{}%
  \IfStrEq{\statusval}{Partial}{\cellcolor{partial}\textbf{Partial}}{}%
  \IfStrEq{\statusval}{External}{\cellcolor{external}\textbf{External}}{}%
  \endgroup
}

\newtcolorbox{myboxi}[1][]{
  breakable,
  title=#1,
  colback=red!5,
  colbacktitle=red!5,
  coltitle=black,
  fonttitle=\bfseries,
  bottomrule=0pt,
  toprule=0pt,
  leftrule=2pt,
  rightrule=2pt,
  titlerule=0pt,
  arc=0pt,
  outer arc=0pt,
  colframe=red,
}

\newtcolorbox{myboxnote}[1][]{
  breakable,
  title=#1,
  colback=orange!0,
  colbacktitle=orange!0,
  coltitle=black,
  fonttitle=\bfseries,
  bottomrule=0pt,
  toprule=0pt,
  leftrule=2pt,
  rightrule=2pt,
  titlerule=0pt,
  arc=0pt,
  outer arc=0pt,
  colframe=orange,
}

\newtcolorbox{myboxii}[1][]{
  breakable,
  freelance,
  title=#1,
  colback=white,
  colbacktitle=white,
  coltitle=black,
  fonttitle=\bfseries,
  bottomrule=0pt,
  boxrule=0pt,
  colframe=white,
  overlay unbroken and first={
  \draw[red!75!black,line width=3pt]
    ([xshift=5pt]frame.north west) -- 
    (frame.north west) -- 
    (frame.south west);
  \draw[red!75!black,line width=3pt]
    ([xshift=-5pt]frame.north east) -- 
    (frame.north east) -- 
    (frame.south east);
  },
  overlay unbroken app={
  \draw[red!75!black,line width=3pt,line cap=rect]
    (frame.south west) -- 
    ([xshift=5pt]frame.south west);
  \draw[red!75!black,line width=3pt,line cap=rect]
    (frame.south east) -- 
    ([xshift=-5pt]frame.south east);
  },
  overlay middle and last={
  \draw[red!75!black,line width=3pt]
    (frame.north west) -- 
    (frame.south west);
  \draw[red!75!black,line width=3pt]
    (frame.north east) -- 
    (frame.south east);
  },
  overlay last app={
  \draw[red!75!black,line width=3pt,line cap=rect]
    (frame.south west) --
    ([xshift=5pt]frame.south west);
  \draw[red!75!black,line width=3pt,line cap=rect]
    (frame.south east) --
    ([xshift=-5pt]frame.south east);
  },
}

\tcbset{
  takeawaysbox/.style={
    title=Takeaways,
    colback=lightblue!80,
    colframe=black,
    fonttitle=\bfseries\small,
    coltitle=white,
    colbacktitle=black,
    enhanced,
    attach boxed title to top left={xshift=2.5mm,yshift=-2.5mm},
    boxed title style={rounded corners, size=small, colframe=black, colback=black},
    width=\linewidth,
    arc=3.5mm
  }
}

\mdfdefinestyle{mystyle}{%
  rightline=true,
  innerleftmargin=10,
  innerrightmargin=10,
  outerlinewidth=3pt,
  topline=false,
  rightline=true,
  bottomline=false,
  skipabove=\topsep,
  skipbelow=\topsep
}

\tikzset{%
    every node/.style={font=\tiny},
    parent/.style =          {align=center,text width=2cm,rounded corners=3pt, line width=0.3mm, fill=gray!10,draw=gray!80},
    child/.style =           {align=center,text width=2.0cm,rounded corners=3pt, fill=blue!10,draw=blue!80,line width=0.3mm},
    grandchild/.style =      {align=center,text width=2cm,rounded corners=3pt},
    greatgrandchild/.style = {align=center,text width=1.5cm,rounded corners=3pt},
    greatgrandchild2/.style = {align=center,text width=1.5cm,rounded corners=3pt},    
    referenceblock/.style =  {align=center,text width=1.5cm,rounded corners=2pt},
    pretrain/.style =           {align=center,text width=2.0cm,rounded corners=3pt, fill=blue!10,draw=blue!80,line width=0.3mm},   
    pretrain_work/.style =           {align=center, text width=8.5cm,rounded corners=3pt, fill=blue!10,draw=blue!0,line width=0.3mm},  
    template/.style =           {align=center,text width=2.0cm,rounded corners=3pt, fill=red!10,draw=red!80,line width=0.3mm},   
    template_work/.style =           {align=center,text width=8.5cm,rounded corners=3pt, fill=red!10,draw=red!0,line width=0.3mm},    
    answer/.style =           {align=center,text width=2.0cm,rounded corners=3pt, fill= cyan!10,draw= cyan!80,line width=0.3mm},   
    answer_work/.style =           {align=center,text width=8.5cm,rounded corners=3pt, fill= cyan!10,draw= cyan!0,line width=0.3mm},      
    multiple/.style =           {align=center,text width=2.0cm,rounded corners=3pt, fill= orange!10,draw= orange!80,line width=0.3mm},   
    multiple_work/.style =           {align=center,text width=8.5cm,rounded corners=3pt, fill= orange!10,draw= orange!0,line width=0.3mm},        
    tuning/.style =           {align=center,text width=2.0cm,rounded corners=3pt, fill= magenta!10,draw= magenta!80,line width=0.3mm},   
    tuning_work/.style =           {align=center,text width=8.5cm,rounded corners=3pt, fill= magenta!10,draw= magenta!0,line width=0.3mm},          
}

\newcommand{\lstbg}[3][0pt]{{\fboxsep#1\colorbox{#2}{\strut #3}}}

\lstdefinelanguage{diff}{
  basicstyle=\ttfamily\small,
  morecomment=[f][\lstbg{red!20}]-,
  morecomment=[f][\lstbg{green!20}]+,
}

\lstdefinelanguage{diffpython}{
  language=diff,
  morekeywords={def, if, else, for, while, return, import, from, as, class, with, try, except, finally, raise, lambda, and, or, not, in, is, None, True, False},
  morecomment=[l]{\#},
  morestring=[b]",
  morestring=[b]',
}

\usepackage{xspace}

\usepackage{xcolor}

\definecolor{ForestGreen}{RGB}{34,139,34}
\definecolor{myyellow}{RGB}{181, 181, 27}

\usepackage{amsmath}
\usepackage{amssymb}
\usepackage{mathtools}
\usepackage{amsthm}
\usepackage{fontawesome5} 

\usepackage[capitalize,noabbrev]{cleveref}

\usepackage{multirow}
\usepackage{makecell}
\usepackage{enumerate}
\usepackage{enumitem}
\usepackage{pifont}

\definecolor{mygrey}{gray}{0.4}

\usepackage[ruled,algo2e,vlined]{algorithm2e}
\SetKwInOut{Input}{Input}\SetKwInOut{Output}{Output}
\SetKwComment{Comment}{$\triangleright$\ }{}

\definecolor{darkgreen}{RGB}{30, 130, 30}
\definecolor{cream}{RGB}{253, 250, 242}
\renewcommand{\cmark}{\textcolor{darkgreen}{\ding{51}}} 
\renewcommand{\xmark}{\textcolor{red}{\ding{55}}}       

\usepackage{listings}

\usepackage{ulem}
\usepackage{pifont}

\setheadertext{LUMIA Lab}
\definecolor{highlight}{gray}{0.9}

\correspondingemail{\emailicon\ yang\_luo@u.nus.edu \quad $^\dagger$ Corresponding Author}

\title{On-Policy Adversarial Flow Distillation for Autoregressive Video Generation}
\setheadertitle{On-Policy Adversarial Flow Distillation for Autoregressive Video Generation}

\author{%
  Yang Luo$^{1}$, Shengju Qian$^{2 \dagger}$,
  Xiaohang Tang$^{3}$, Zirui Zhu$^{1}$, Yong Liu$^{1}$, Xin Wang$^{2}$, Yang You$^{1}$\\
  $^1$National Univesity of Singapore
  $^2$LIGHTSPEED
  $^3$University College London \\
}

\begin{document}

\begin{abstract}
Autoregressive video generators are attractive for streaming, long-horizon, and interactive applications, but distilling strong black-box teachers into causal students remains difficult. The student must learn under its own rollout distribution, whereas practical teachers may expose only prompt-conditioned completed videos and may differ in architecture, capacity, temporal design, and sampling schedule. This interface makes supervised fine-tuning off-policy, score-based distillation inapplicable, and direct adversarial imitation too sparse for denoising-time credit assignment. We propose \emph{Adversarial Flow Distillation} (AFD), an on-policy framework for heterogeneous black-box video distillation. AFD queries the teacher and rolls out the current student on the same prompts, trains a prompt-paired Bradley--Terry discriminator to estimate clean-sample teacher--student discrepancy, and converts the resulting on-policy advantage into forward-process flow-matching updates on the student's own noised states. Thus, AFD provides dense velocity-field supervision while requiring no teacher scores, latents, denoising trajectories, step alignment, or reverse-chain reinforcement learning. Experiments across two causal AR student families show that AFD consistently improves motion- and physics-sensitive generation while preserving general video quality, and ablations validate the importance of adaptive on-policy feedback and forward-process credit assignment. The method requires only clean teacher videos and student rollouts, providing a practical route for distilling proprietary or heterogeneous video generators into efficient autoregressive students.
\end{abstract}

\maketitle


\section{Introduction}
Efficient deployment of modern video generators increasingly depends on distilling large diffusion, score-based, or flow-matching models into smaller students \citep{ho2020ddpm,song2021score,lipman2023flow,liu2022rectified,salimans2022progressive,song2023consistency,yin2024dmd,kim2025vip,chern2025livetalk}. In practice, many strong video teachers are accessible only as black-box samplers that return completed clips, without exposing scores, logits, latents, sampler states, or denoising trajectories \citep{brooks2024sora,polyak2024moviegen,klingteam2025klingomni}. This is especially restrictive for autoregressive (AR) students such as Self-Forcing \citep{huang2025selfforcing}: a black-box teacher may use a different architecture, temporal conditioning scheme, and long denoising schedule, while the deployable AR student generates causally with few denoising steps. Existing distillation recipes therefore lack an aligned supervision channel from the teacher's hidden generation process to the student's rollout distribution and intermediate noised states.

A straightforward adaptation strategy is supervised fine-tuning (SFT) on teacher-generated videos. Although SFT does not require teacher scores, it trains the AR student under teacher-induced prefixes rather than under the student's own rollout distribution. At inference time, the student conditions on its previously generated frames; local errors can therefore shift future denoising states and accumulate over the video horizon \citep{bengio2015scheduled}. A preliminary SFT sweep in Figure~\ref{fig:sft_trend} confirms this mismatch: longer off-policy training fails to consistently improve either VBench~\citep{huang2024vbench} or VideoPhy-2~\citep{bansal2026videophy2}.

The SFT mismatch motivates on-policy adaptation, yet existing on-policy objectives still assume supervision unavailable in black-box video distillation. DMD-style objectives require teacher scores, density ratios, or compatible noised states \citep{yin2024dmd}, none of which are available from a sampling-only video teacher. Reward- and preference-based diffusion alignment can optimize models from sample-level feedback \citep{black2023ddpo,wallace2024diffusiondpo,liu2025videoalign,liu2025flowgrpo,xue2025dancegrpo}, but a scalar score on a completed video does not identify which frames, motion patterns, or denoising states account for the teacher--student discrepancy. The useful signal is observable only at completed videos, while the object being trained is a time-dependent vector field evaluated on the student's own noised AR states. The key challenge is therefore not merely to assign a reward to a video, but to lift black-box video evidence into dense flow-matching supervision for the student's causal denoising process.

\begin{wrapfigure}{r}{0.50\textwidth}
\centering
\vspace{-1.0em}
\includegraphics[width=0.5\textwidth]{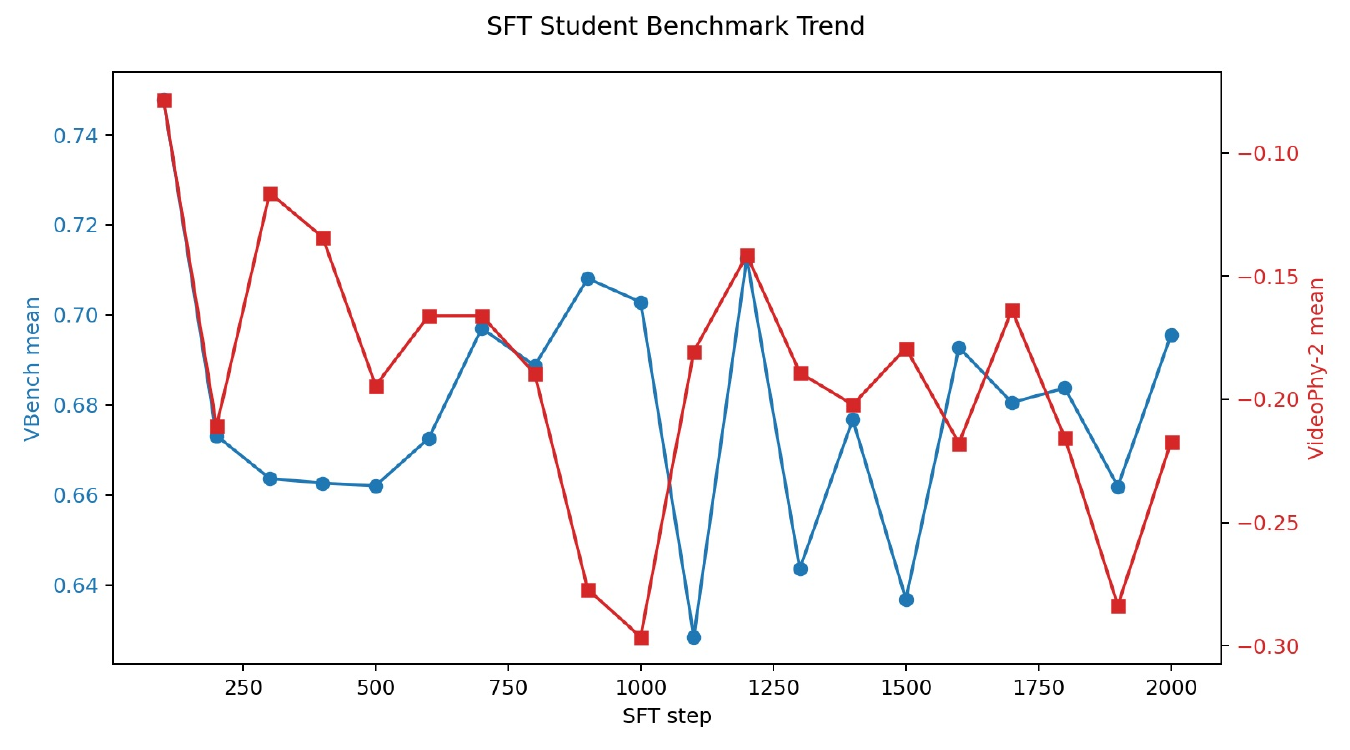}
\caption{Preliminary SFT trend for an AR video student. VBench and VideoPhy-2 scores decrease rather than improve monotonically as SFT steps increase.}
\label{fig:sft_trend}
\vspace{-1.0em}
\end{wrapfigure}

We propose \emph{Adversarial Flow Distillation} (AFD), an on-policy distillation framework for autoregressive video generation under heterogeneous black-box teacher access. AFD factorizes the problem into clean-sample distribution-ratio estimation and forward-process vector-field regression. For each prompt batch, it queries the teacher for completed videos, rolls out the current student under autoregressive self-conditioning, and trains a prompt-conditioned discriminator on teacher samples versus current student samples. The discriminator produces an on-policy advantage score, normalized against the current batch or prompt distribution, which is identifiable from black-box samples and co-evolves with the student.

AFD then uses this advantage inside a reward-weighted flow-matching objective rather than a reverse-trajectory policy-gradient objective. Student rollouts are forward-noised by the student's own schedule, producing the same kind of intermediate states on which its velocity field is trained. Within each on-policy batch, high-advantage rollouts form positive examples and low-advantage rollouts form negative examples; the student is then trained to move its vector field toward the forward velocity of higher-scoring rollouts and away from lower-scoring rollouts, with the discriminator advantage controlling the strength of this contrastive correction. This turns a black-box video-level signal into denoising-time updates on the student's own noised states. Consequently, AFD requires no teacher scores, no teacher--student step alignment, and no stored reverse trajectories, while still providing dense supervision beyond video-level adversarial training.
Our contributions are as follows.
\begin{itemize}
\item We identify black-box heterogeneous on-policy distillation as a core obstacle for autoregressive video students, and show that off-policy SFT, score-based DMD, and direct video-level adversarial training are mismatched to the limited teacher interface.
\item We introduce Adversarial Flow Distillation (AFD), a score-free distillation framework that estimates teacher--student discrepancy from completed videos and converts it into dense forward-process flow-matching updates on the student's own noised rollouts.
\item We evaluate AFD on two causal autoregressive video backbones, showing consistent gains on motion- and physics-sensitive metrics under black-box teacher access, together with ablations on domain adaptation and discriminator design.
\end{itemize}

\section{Related Work}
\textbf{Video diffusion and flow models.}
DDPM \citep{ho2020ddpm} and score-based SDEs \citep{song2021score} established denoising-time generation, while Flow Matching \citep{lipman2023flow} and Rectified Flow \citep{liu2022rectified} recast generation as continuous-time vector-field learning. DiT \citep{peebles2023dit} improved transformer diffusion scalability and now underlies video systems including Video Diffusion Models \citep{ho2022video}, Lumiere \citep{bartal2024lumiere}, CogVideoX \citep{yang2024cogvideox}, Movie Gen \citep{polyak2024moviegen}, Sora \citep{brooks2024sora}, Kling-Omni \citep{klingteam2025klingomni}, and Wan \citep{wanteam2025wan}. Recent video distillation work such as V.I.P. \citep{kim2025vip} and LiveTalk \citep{chern2025livetalk} studies online or on-policy recipes for efficient video generation. AFD focuses on a different transfer interface: an AR student learns on its own rollouts while a sampling-only teacher provides only completed videos.

\textbf{Black-box on-policy distillation.}
On-policy distillation \citep{lu2025opd, ye2025gad} keeps training on the student's own trajectory distribution while using a teacher for supervision. In language models, this often means teacher feedback on student-generated prefixes, and Rethinking OPD \citep{li2026rethinkingopd} analyzes when such feedback succeeds or fails. VLA-OPD \citep{zhong2026vlaopd} and Video-OPD \citep{li2026videoopd} apply the same on-policy idea to action and temporal grounding tasks. Our setting differs from LLM OPD because the teacher cannot provide token-level logits or local probability ratios: a sampling-only video teacher returns completed clips, while the student must learn a continuous-time video flow. AFD therefore estimates teacher--student distributional discrepancy with a discriminator and projects that signal to denoising-time states with DiffusionNFT.

\textbf{Adversarial and preference-guided alignment.}
DDPO \citep{black2023ddpo}, Diffusion-DPO \citep{wallace2024diffusiondpo}, and VideoAlign \citep{liu2025videoalign} show that learned or human feedback can guide diffusion models, but reverse-trajectory policy gradients are expensive for video. DiffusionNFT \citep{zheng2025diffusionnft} instead optimizes on the forward process from clean generated samples, avoiding likelihood estimation and reverse-trajectory storage; Astrolabe \citep{zhang2026astrolabe} adapts this view to distilled AR video alignment. Continuous Adversarial Flow Models \citep{lin2026cafm} further suggest that learned criteria can improve finite-capacity flow post-training. AFD uses these ideas for teacher distillation rather than generic reward maximization: feedback comes from a co-evolving teacher--student discriminator evaluated on on-policy student videos.

\vspace{-0.1em}
\section{Method}
\label{sec:method}
Let $\pi_T(x_0 | y)$ denote a black-box teacher that returns a video for prompt $y$, and let $\pi_\theta$ denote a causal autoregressive student flow model with velocity field $f_\theta(x_t,t,y)$. The teacher may differ from the student in architecture, capacity, latent representation, and sampling schedule. We assume \textit{no access to} teacher parameters, scores, latents, or denoising trajectories; the only shared interface is the completed prompt-conditioned video. At each iteration, prompts are sampled from a distribution $y \sim \mathcal{Y}$, the teacher returns $x_0^T \sim \pi_T(\cdot | y)$, and the current student produces an on-policy rollout $\hat{x}_0 \sim \pi_\theta(\cdot | y)$ under autoregressive self-conditioning. As shown in Figure~\ref{fig:method_opd}, AFD consists of an adaptive video discriminator that estimates teacher--student distributional discrepancy on student rollouts and a DiffusionNFT update that transfers this sample-level signal to the student's forward noising process.

\begin{figure}[t]
\centering
\includegraphics[width=\linewidth]{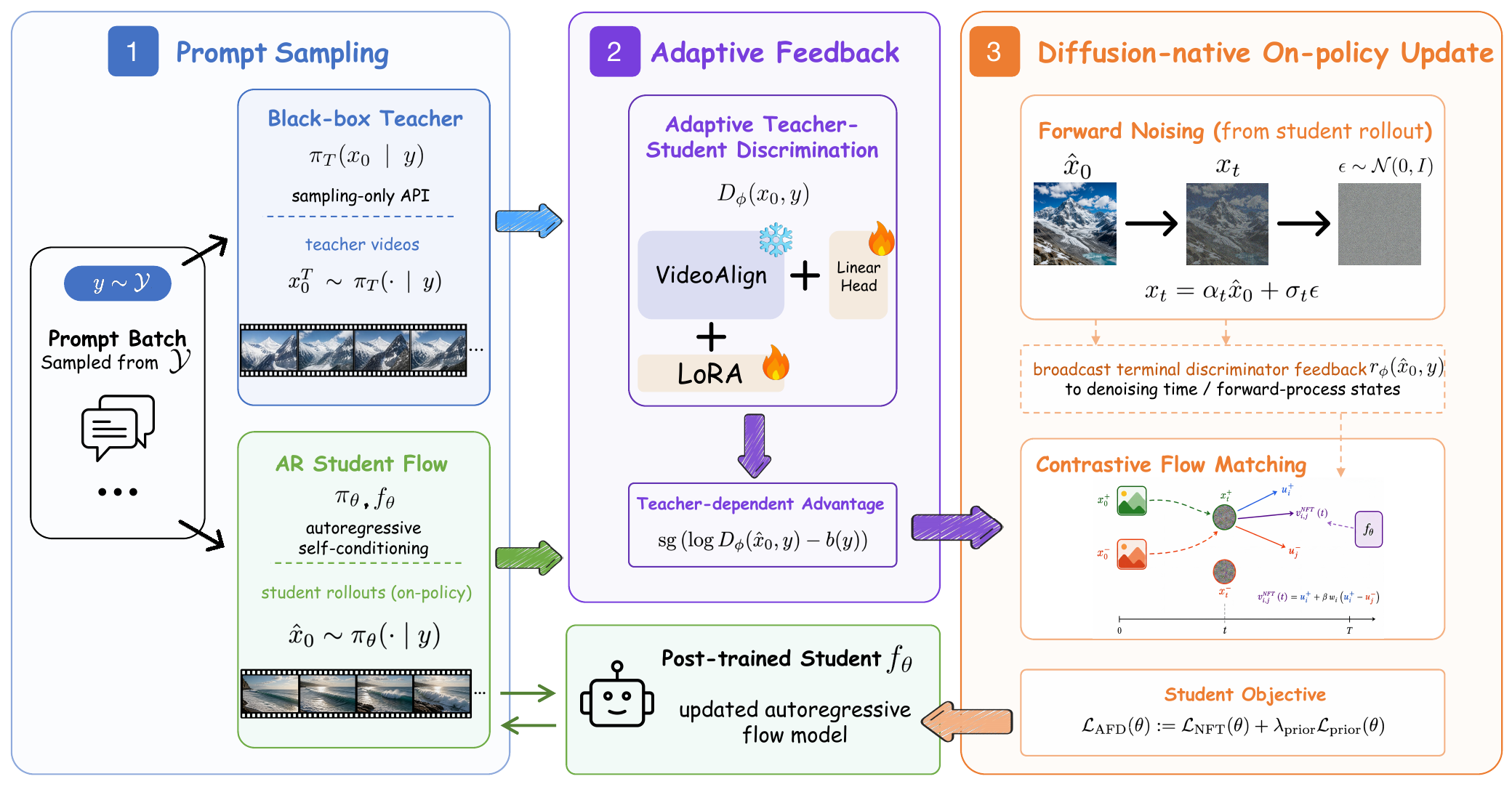}
\caption{Overview of AFD for black-box on-policy video distillation. A sampling-only teacher provides prompt-conditioned completed videos, while the current AR student provides on-policy rollouts. A co-evolving discriminator estimates teacher--student distributional discrepancy from completed samples, and DiffusionNFT converts this black-box sample-level signal into dense forward-process flow updates on the student's own noised states.}
\label{fig:method_opd}
\end{figure}

\subsection{Adaptive Teacher--Student Discrimination}
We train a prompt-conditioned spatiotemporal discriminator $D_\phi(x_0,y)$ that scores how likely a video $x_0$ is to be a teacher sample given prompt $y$. For each prompt, we treat the teacher sample as preferred over the current student rollout, yielding the Bradley–Terry (BT) loss:
\begin{equation}
\mathcal{L}_D(\phi)
:=
-\mathbb{E}_{(x_0^T,\hat{x}_0,y)}
\log \sigma\!\left(D_\phi(x_0^T,y)-D_\phi(\hat{x}_0,y)\right),
\end{equation}
where $\sigma(\cdot)$ is the logistic function. This pairwise objective is standard in preference modeling: it increases the discriminator margin between teacher samples and current student samples under the same prompt, without requiring calibrated absolute rewards. In practice, $D_\phi$ can be initialized from a video preference or text-video alignment model such as VideoAlign \citep{liu2025videoalign} and adapted with LoRA. Rather than defining a fixed global reward, the discriminator estimates the current discrepancy between teacher and student samples under the shared prompt distribution, providing on-the-fly feedback for on-policy rollouts when teacher scores are unavailable.

The discriminator induces a discrepancy signal, which we use as an adaptive reward function:
\begin{equation}
r_\phi(\hat{x}_0,y)=\operatorname{sg}\left(D_\phi(\hat{x}_0,y)-b(y)\right),
\end{equation}
where $\operatorname{sg}(\cdot)$ denotes stop-gradient and $b(y)$ is a batch or prompt-level baseline. Unlike a fixed reward model, this signal co-evolves with the student model and measures whether current student videos remain distinguishable from teacher videos under the current prompt distribution.

\subsection{Diffusion-Native On-Policy Update}
The discriminator signal is sample-level because it scores completed videos. Directly optimizing this signal with reverse-trajectory policy gradients would require treating the reverse denoising chain as an RL trajectory, which is costly for long, high-resolution videos. We instead adopt a forward-process diffusion optimization method that optimizes diffusion models using only clean samples and the forward noising process~\citep{choi2026rethinking}, namely DiffusionNFT \citep{zheng2025diffusionnft}. For a student rollout $\hat{x}_0$, timestep $t \in [0,1]$, and noise $\epsilon \sim \mathcal{N}(0,I)$, define the forward-noised sample and its corresponding flow-matching target as
\begin{equation}
\label{eq:forward_path}
x_t = \alpha_t \hat{x}_0 + \sigma_t \epsilon,\qquad
v=\dot{\alpha}_t \hat{x}_0+\dot{\sigma}_t\epsilon .
\end{equation}


We use the discriminator to score on-policy rollouts with $\pi_\theta$. We then determine the weights for positive and negative policy optimization. Concretely, for a minibatch of student videos $\{\hat{x}_{0,i}\}_{i=1}^B$, let $A_i=r_\phi(\hat{x}_{0,i},y_i)$ and normalize it to a weight $w_i \in [0,1]$ within the batch. We apply the forward process on this batch of videos following Equation \ref{eq:forward_path} to obtain noised samples $\{x_{t,i}\}_{i=1}^B$. Denoting the parametrized velocity as $v_\theta(x_{t,i},t,y_i)$, we define two operators for positive and negative policy optimization under the on-policy sampling setting:
\[
v_{\theta}^{+} := (1-\beta)\operatorname{sg}(v_\theta) + \beta v_\theta,
\qquad
v_{\theta}^{-} := (1+\beta)\operatorname{sg}(v_\theta) - \beta v_\theta .
\]
the negative-aware fine-tuning (NFT) loss whose optimum implicitly drives $v_\theta$ towards $v^+$ is
\begin{align}
\mathcal{L}_\text{NFT}(\theta)
=
\mathbb{E}_{\,t ,\,x_{t,i}}
\big[
\sum_{i=1}^B w_i \cdot \|v_{\theta}^{+}(x_{t,i},t,y_i) - v\|^{2}
+
(1-w_i) \cdot \|v_{\theta}^{-}(x_{t,i},t,y_i) - v\|^{2} \big],
\end{align}
Additionally, we include a regularization term in the objective function to avoid catastrophic forgetting $\mathcal{L}_{\mathrm{prior}} = \mathbb{E}_{\,t ,\,x_{t,i}}
\sum_{i=1}^B w_i \|v_{\theta}(x_{t,i},t,y_i) - v^\text{ref}(x_{t,i},t,y_i)\|^{2}$, where the reference model $v^\text{ref}$ is the initial pre-trained diffusion model. The objective of our method AFD is finally defined as:
\begin{equation}
\mathcal{L}_{\mathrm{AFD}}(\theta)
:= \mathcal{L}_{\mathrm{NFT}}(\theta)
+\lambda_{\mathrm{prior}}\mathcal{L}_{\mathrm{prior}}(\theta).
\end{equation}

\subsection{Black-Box Teacher Interface}

AFD is motivated by the measurability constraint imposed by a black-box video teacher. The teacher exposes only the clean-sample channel
\begin{equation}
\mathcal{O}_T:\quad y \mapsto x_0^T\sim \pi_T(\cdot| y),
\end{equation}
not $\nabla_x\log p_T(x_t| y)$, teacher latents, teacher reverse states, or a teacher transition kernel. Any admissible distillation signal must therefore be a function of completed teacher videos and completed student videos. This constraint is restrictive for an AR flow student, because the trained object is not a video classifier but a vector field $f_\theta(x_t,t,h_k,y)$ evaluated at noised states along student-induced histories $h_k=x_0^{(<k)}$.

Let $x_0=[x_0^{(1)},\ldots,x_0^{(K)}]$ denote a sequence of video blocks. The AR student induces
\begin{equation}
\pi_\theta(x_0 | y)
=
\prod_{k=1}^{K}
\pi_\theta\!\left(x_0^{(k)} | x_0^{(<k)},y\right),
\end{equation}
so student errors alter the future conditioning distribution. A bidirectional black-box teacher, however, is observed only through $\pi_T(x_0| y)$. Its hidden sampler may contain $M_T$ denoising evaluations, whereas the student may use $M_S\ll M_T$. Without teacher trajectories or a shared latent space, there is no observable alignment operator
\begin{equation}
\mathcal{A}_{m,s}: z_{m}^{T}\mapsto (h_k,x_s^{S})
\end{equation}
from a teacher step $m$ to a student AR denoising state $s$. The absence of such an operator clarifies the limitations of two common baselines. SFT trains under teacher-induced prefixes rather than the student's rollout distribution, so exposure bias accumulates along the video horizon \citep{bengio2015scheduled}. DMD-style objectives require either a teacher score or a teacher density ratio at the student's noised state,
\begin{equation}
\nabla_x \log p_T(x_t| y)
\quad\text{or}\quad
\log \frac{\pi_T(\hat{x}_0| y)}{\pi_\theta(\hat{x}_0| y)}.
\end{equation}
The score term is unobservable, and the clean-sample ratio can only be estimated from samples at $x_0$. The relevant objective is therefore not to reconstruct the teacher's hidden diffusion path, but to transfer completed-video evidence to the student's own noised states. DiffusionNFT matches this interface: it requires only clean samples from the current policy, a scalar score on those samples, and the student's known forward noising kernel.

This differs from GRPO-style diffusion RL. Methods such as Flow-GRPO \citep{liu2025flowgrpo} and DanceGRPO \citep{xue2025dancegrpo} make diffusion or flow policies amenable to online RL by casting the reverse denoising process as an MDP, introducing stochastic reverse trajectories, and optimizing group-relative policy objectives on sampled rollouts. This interface is well suited to reward-model alignment, where the optimized model's reverse sampler defines the environment. It is less suitable for black-box teacher distillation: the teacher provides no reverse actions, no teacher trajectory probabilities, and no step-level supervision on the student's reverse chain. Applying a reverse-MDP objective therefore introduces a synthetic credit-assignment layer that is not supported by the teacher interface. AFD instead keeps the teacher-dependent signal at the clean-sample level, where it is identifiable, and uses the forward noising kernel to induce the dense denoising-time structure.

\subsection{Forward-Process Credit Assignment}

\label{sec:theory}

AFD follows this design by separating teacher evidence extraction from denoising-time credit assignment. First, the discriminator estimates the density-ratio information identifiable from black-box clean samples. The optimal discriminator that minimizes the BT loss satisfies
\begin{equation}
\rho_\phi^*(\hat{x}_0,y)
:=
\log \frac{D_\phi^*(\hat{x}_0,y)}{1-D_\phi^*(\hat{x}_0,y)}
=
\log \frac{\pi_T(\hat{x}_0| y)}{\pi_\theta(\hat{x}_0| y)}.
\end{equation}
This ratio is defined on completed videos sampled from the current student distribution; it requires no teacher score, teacher timestep, or architectural alignment.

Second, AFD converts this clean-sample ratio into a vector-field target by applying the student's forward noising map. Let $\mathcal{F}_t$ denote this teacher-agnostic forward-noising operator. The key compatibility condition is that $\mathcal{F}_t$ is entirely student-side: it depends on the student's noising schedule and generated clean video, not on any teacher state. A generic sample-level advantage $r(\hat x_0,y)$ defines the tilted clean-video law following DiffusionNFT:
\begin{equation}
\pi^+(\hat x_0\mid y)
\propto
r(\hat x_0,y)
\pi_\theta(\hat x_0\mid y).
\end{equation}
Applying the student's forward noising kernel $p(x_t|\hat{x}_0)$ following Eq. \ref{eq:forward_path} to this tilted clean
distribution, it can induce a corresponding distribution over noisy states according to Bayes' Rule:
\begin{equation}
\pi^+_t(x_t|y)
\propto
\mathbb{E}_{\hat{x}_0\sim\pi_\theta(\cdot|y)}
\left[
r(\hat x_0,y)p(x_t|\hat{x}_0)
\right].
\end{equation}
Then the optimal positive flow-matching vector field for these noised marginals is the
conditional average forward velocity,
\begin{equation}
v^+(x_t,t,y)
=
\mathbb{E}_{\hat{x}_0\sim\pi^+(\cdot|y),\,p(x_t|\hat{x}_0)}
\left[
v
\,\middle|\,x_t,y
\right].
\end{equation}
where $v$ is the forward-path velocity in Eq.~\ref{eq:forward_path}. Thus, a sample-level reward over completed videos induces a dense
denoising-time vector-field target after being propagated through the
student's forward process.
This identity provides the forward-noising bridge: sample-level preferences define a denoising-time vector field after being propagated through $\mathcal{F}_t$. 
The black-box restriction and AFD therefore operate on compatible information structures: all teacher-dependent information is measurable from completed videos, while denoising-time structure is induced by the student's forward process. AFD converts completed-video teacher evidence into dense corrections on the student's own noised states without reconstructing the teacher's hidden trajectory or storing the student's reverse trajectory. Appendix~\ref{sec:appendix_theory} provides further theoretical derivation.


\subsection{Training Procedure}
\label{sec:training_procedure}

The optimization alternates between on-policy data collection, discriminator updates, and student velocity-field updates, as summarized in Algorithm \ref{alg:afd}. We implement the prior regularizer $\mathcal{L}_{\mathrm{prior}}(\theta)$ as a velocity-regression penalty against a frozen reference student to preserve the student's base capabilities.

\begin{algorithm}[H]
\caption{Adversarial Flow Distillation (AFD)}
\label{alg:afd}
\DontPrintSemicolon
\KwIn{Teacher API $\pi_T$, Student Flow Model $f_\theta$, Discriminator $D_\phi$, Prompt distribution $\mathcal{Y}$}
\KwOut{Post-trained Student Model $f_\theta$}
Initialize student parameters $\theta$ and EMA parameters $\bar{\theta} \leftarrow \theta$, and keep $f_{\mathrm{ref}}$ frozen\;
Initialize discriminator parameters $\phi$ (e.g., via LoRA on VideoAlign)\;

\While{not converged}{
    \tcp{1. On-Policy Data Collection}
    Sample batch of prompts $y \sim \mathcal{Y}$\;
    
    Query teacher for target videos: $x_0^T \sim \pi_T(\cdot | y)$\;
    
    Sample student rollouts: $\hat{x}_0 \sim \pi_\theta(\cdot | y)$\;
    
    \tcp{2. Adaptive Teacher-Student Discrimination}
    Compute discriminator loss $\mathcal{L}_D(\phi)$ comparing $x_0^T$ and $\hat{x}_0$\;
    
    Update discriminator $\phi \leftarrow \phi - \eta_D \nabla_\phi \mathcal{L}_D(\phi)$\;
    
    Compute baseline $b(y)$ (e.g., batch mean of $D_\phi(\hat{x}_0, y)$)\;
    
    Compute advantage scores $r_\phi(\hat{x}_0, y) = \operatorname{sg}(D_\phi(\hat{x}_0, y) - b(y))$\;
    
    \tcp{3. Diffusion-Native Student Update}
    Sample denoising timesteps $t \sim \mathcal{U}[0, 1]$ and noise $\epsilon \sim \mathcal{N}(0, I)$\;
    
    Construct forward-noised student states $x_t = \alpha_t \hat{x}_0 + \sigma_t \epsilon$\;
    Form positive/negative rollout sets from $r_\phi$ and evaluate $\mathcal{L}_{\mathrm{NFT}}(\theta)$ using $x_t$\;
    
    Evaluate prior regularization $\mathcal{L}_{\mathrm{prior}}(\theta)$ against reference student $f_{\mathrm{ref}}$\;
    
    Update student $\theta \leftarrow \theta - \eta_\theta \nabla_\theta \left( \mathcal{L}_{\mathrm{NFT}} + \lambda_{\mathrm{prior}} \mathcal{L}_{\mathrm{prior}} \right)$\;
    
    \tcp{4. EMA Update}
    Update target network $\bar{\theta} \leftarrow \beta \bar{\theta} + (1-\beta) \theta$\;
}
\end{algorithm}

\vspace{-0.3em}
\section{Experiments}
\label{sec:exp}

\subsection{Setup}
\label{sec:setup}

\textbf{Experimental setup.}
We evaluate AFD on two causal autoregressive student families, Self-Forcing and Causal-Forcing. The teacher is Seedance 2.0 \citep{bytedanceseed2026seedance}, accessed only as a prompt-conditioned video sampling API. The main experiment is a continual adaptation setting: a pretrained AR student is adapted to a physics-oriented target domain while preserving general video quality. We sample $200$ examples from the VideoPhy-2 physics benchmark and query the teacher on the same prompts to obtain black-box adaptation videos. Unless otherwise specified, discriminators are initialized from VideoAlign \citep{liu2025videoalign}, adapted with LoRA, and updated online against the current student. We report VBench \citep{huang2024vbench} dimensions grouped into Physics and General categories. Physics includes temporal flickering, motion smoothness, dynamic degree, human action, and spatial relationship; General is the mean over the remaining VBench dimensions. We also evaluate target-domain adaptation with VideoAlign Motion Quality (VideoAlign-MQ) \citep{liu2025videoalign} and VideoPhy-2 Physical Consistency (VideoPhy-2-PC) \citep{bansal2026videophy2}. Full hyperparameters are provided in Appendix~\ref{sec:hyperparameters}.

\textbf{Baselines.}
We consider four baselines:
\begin{itemize}
\item \textbf{Base}: the pretrained AR student before teacher adaptation.
\item \textbf{SFT}: supervised fine-tuning on teacher-generated videos without on-policy student rollouts.
\item \textbf{GAN}: adversarial video-level training with the teacher--student discriminator, excluding the forward-process policy update.
\item \textbf{Score-free DMD}: a DMD-style training scaffold with the score-based distribution-matching term removed, isolating sample-level supervision under the same black-box access constraints.
\end{itemize}
\subsection{Main Results}
\label{sec:main_results}

Tables~\ref{tab:main_results} and~\ref{tab:motion_physics_aux} show that AFD improves physics-sensitive generation under limited target-domain data while preserving general video generation capability. On Self-Forcing, AFD reaches the best Physics VBench Total ($87.55$), VideoAlign-MQ ($0.605$), and VideoPhy-2-PC ($4.20$), with a General Total close to the best baseline ($60.83$ vs.\ $60.95$). On Causal-Forcing, AFD gives the best Physics VBench Total ($88.52$) and VideoPhy-2-PC ($4.24$), ties the best VideoAlign-MQ ($0.661$), and keeps the General Total close to the highest score ($59.83$ vs.\ $60.32$). Compared with the adapted baselines, AFD gives the strongest dynamic degree on both student families, indicating better motion and physical behavior without a large drop in general quality.

\begin{table}[h]
\vspace{-10pt}
\centering
\caption{Main VBench results after black-box continual adaptation. Physics reports all dimensions and their average; General reports the group average.}
\label{tab:main_results}
\tiny
\setlength{\tabcolsep}{2.0pt}
\renewcommand{\arraystretch}{1.08}
\begin{adjustbox}{width=\textwidth}
\begin{tabular}{c@{\hspace{6pt}}c@{\hspace{4pt}}|@{\hspace{4pt}}cccccc|c}
\toprule
\multirow{2}{*}[-6pt]{\textbf{Model}} & \multirow{2}{*}[-6pt]{\textbf{Method}}
& \multicolumn{6}{c|}{\textbf{Physics (VBench)}}
& \textbf{General (VBench)} \\
\cmidrule(lr){3-8}\cmidrule(lr){9-9}
& 
& \textbf{Total$\uparrow$}
& \textbf{\makecell{Temporal\\Flick.}$\uparrow$}
& \textbf{\makecell{Motion\\Smooth.}$\uparrow$}
& \textbf{\makecell{Dynamic\\Degree}$\uparrow$}
& \textbf{\makecell{Human\\Action}$\uparrow$}
& \textbf{\makecell{Spatial\\Rel.}$\uparrow$}
& \textbf{Total$\uparrow$} \\
\midrule
\multirow{5}{*}{Self-Forcing}
& Base & 68.49 & 98.27 & 97.19 & 91.67 & 46.00 & 9.32 & 36.51 \\
& SFT & 58.69 & 96.64 & 97.40 & 44.44 & 34.00 & 20.97 & 37.62 \\
& GAN & 83.83 & 99.28 & 98.58 & 61.11 & 79.00 & 81.18 & 60.95 \\
& DMD & 81.88 & 99.01 & 98.95 & 52.78 & 81.00 & 77.66 & 59.98 \\
& \textbf{AFD} & \cellcolor{highlight}\textbf{87.55} & \cellcolor{highlight}98.78 & \cellcolor{highlight}98.30 & \cellcolor{highlight}79.17 & \cellcolor{highlight}80.00 & \cellcolor{highlight}81.50 & \cellcolor{highlight}60.83 \\
\midrule
\multirow{5}{*}{Causal-Forcing}
& Base & 86.76 & 98.20 & 97.90 & 87.50 & 76.00 & 74.20 & 59.03 \\
& SFT & 76.24 & 99.31 & 99.05 & 29.17 & 78.00 & 75.69 & 59.40 \\
& GAN & 83.07 & 98.58 & 98.08 & 66.67 & 76.00 & 76.00 & 59.44 \\
& DMD & 82.52 & 99.16 & 98.75 & 58.33 & 77.00 & 79.35 & 60.32 \\
& \textbf{AFD} & \cellcolor{highlight}\textbf{88.52} & \cellcolor{highlight}98.41 & \cellcolor{highlight}97.89 & \cellcolor{highlight}88.89 & \cellcolor{highlight}80.00 & \cellcolor{highlight}77.39 & \cellcolor{highlight}59.83 \\
\bottomrule
\end{tabular}
\end{adjustbox}
\end{table}

\begin{table}[t]
\centering
\caption{Physics evaluations after small-sample physics-domain adaptation.}
\label{tab:motion_physics_aux}
\footnotesize
\setlength{\tabcolsep}{8.0pt}
\renewcommand{\arraystretch}{1.10}
\begin{tabular}{llcc}
\toprule
\textbf{Model} & \textbf{Method}
& \textbf{\makecell{Motion\\Quality$\uparrow$}}
& \textbf{\makecell{Physical\\Consistency$\uparrow$}} \\
\midrule
\multirow{5}{*}{Self-Forcing}
& Base & 0.341 & 4.04 \\
& SFT & 0.296 & 3.72 \\
& GAN & 0.541 & 4.10 \\
& DMD & 0.420 & 3.72 \\
& \textbf{AFD} & \cellcolor{highlight}\textbf{0.605} & \cellcolor{highlight}\textbf{4.20} \\
\midrule
\multirow{5}{*}{Causal-Forcing}
& Base & 0.520 & 4.16 \\
& SFT & 0.499 & 4.17 \\
& GAN & 0.582 & 4.16 \\
& DMD & 0.661 & 4.14 \\
& \textbf{AFD} & \cellcolor{highlight}\textbf{0.661} & \cellcolor{highlight}\textbf{4.24} \\
\bottomrule
\end{tabular}
\end{table}

\begin{figure}[t]
\centering
\begin{minipage}[t]{0.49\textwidth}
\centering
\includegraphics[width=\linewidth]{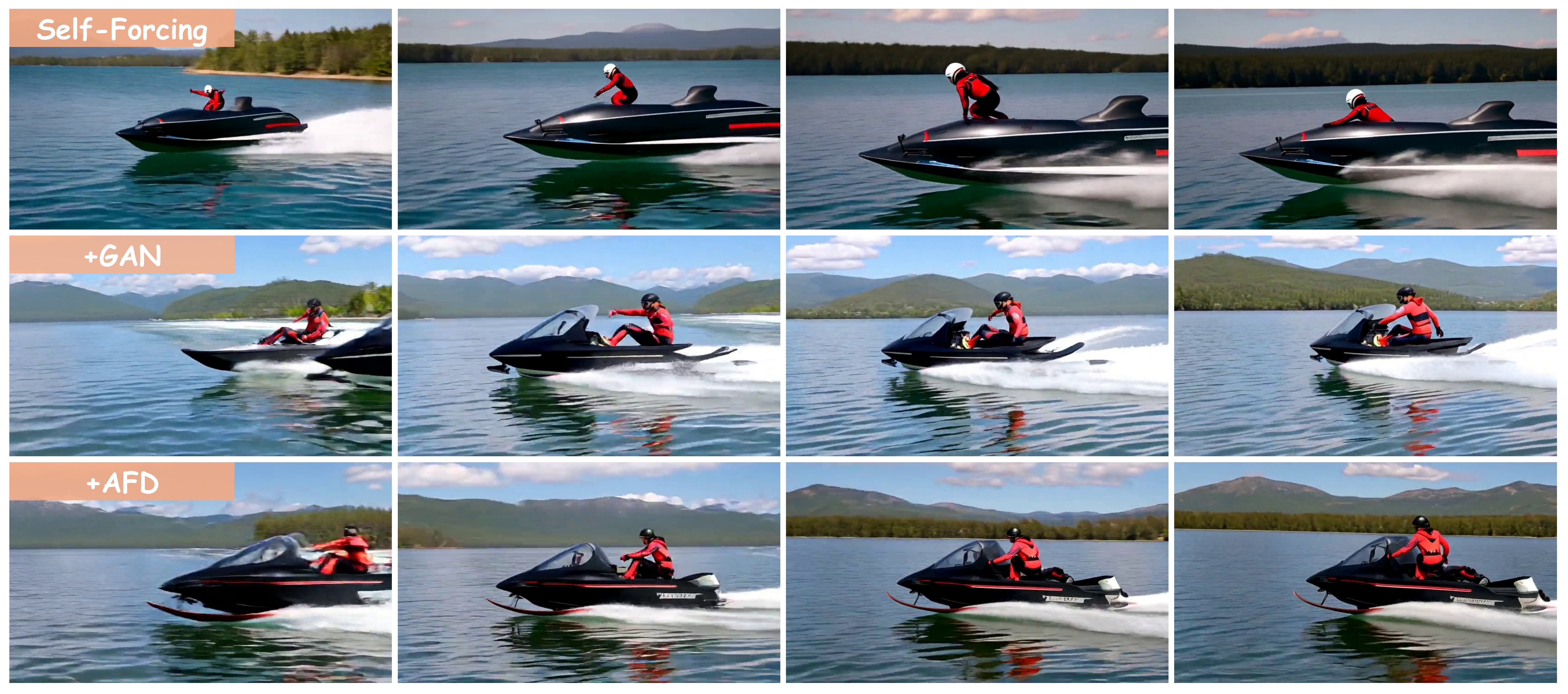}
\end{minipage}
\hfill
\begin{minipage}[t]{0.49\textwidth}
\centering
\includegraphics[width=\linewidth]{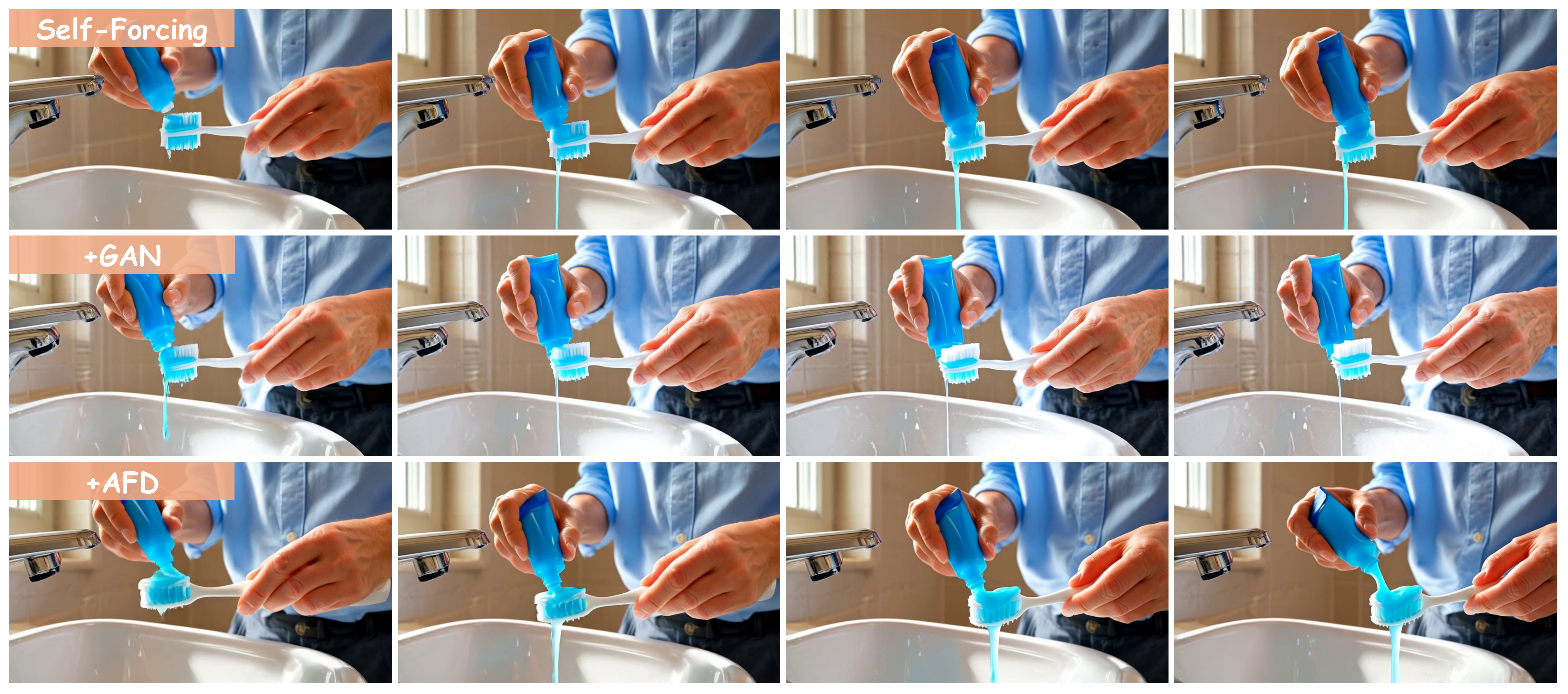}
\end{minipage}
\par\vspace{0.4em}
\begin{minipage}[t]{0.49\textwidth}
\centering
\includegraphics[width=\linewidth]{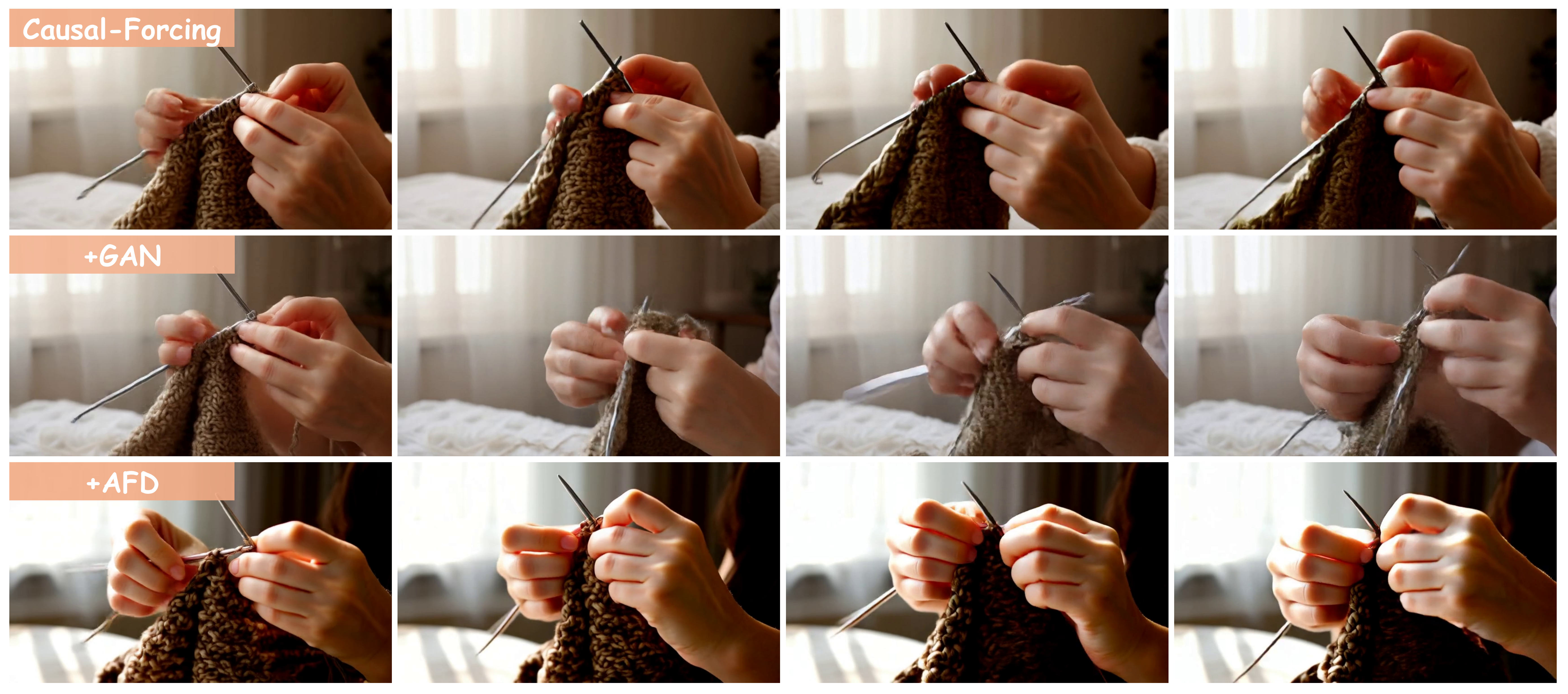}
\end{minipage}
\hfill
\begin{minipage}[t]{0.49\textwidth}
\centering
\includegraphics[width=\linewidth]{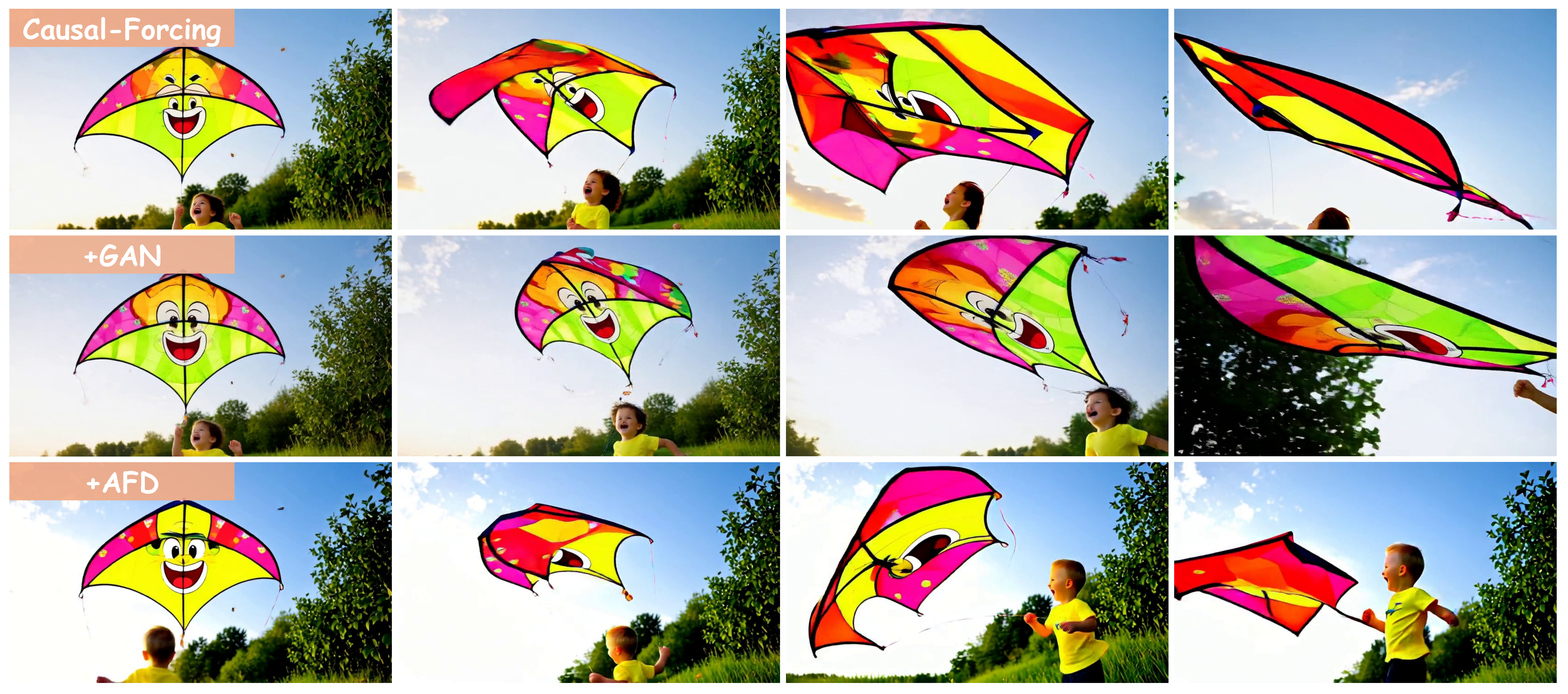}
\end{minipage}
\caption{Qualitative visualizations of AFD in the main experiment. The first row shows Self-Forcing examples, and the second row shows Causal-Forcing examples. Each example shows representative teacher--student comparison frames under the same prompt, illustrating how the on-policy AFD update changes the student's generated video distribution. Prompts are given in Table~\ref{tab:prompts_main_qualitative}.}
\label{fig:afd_qualitative}
\vspace{-1.5em}
\end{figure}

Figure~\ref{fig:afd_qualitative} supports the trends in Tables~\ref{tab:main_results} and~\ref{tab:motion_physics_aux}. Across both AR student families, AFD keeps the main prompt content and visual quality while improving motion-related behavior. This matches the method design: the discriminator scores completed on-policy rollouts, and the DiffusionNFT update transfers this signal to the student's forward-noised states. The examples therefore support the quantitative gains in dynamic degree, motion quality, and physical consistency. Additional examples are provided in Appendix~\ref{sec:additional_qualitative}.

\subsection{Ablations}
\label{sec:ablations}

\textbf{Continual learning on a stylized domain.}
We further test AFD on a shifted visual domain using 200 Disney-style animation prompts. Figure~\ref{fig:afd_anime_qualitative} shows that the adapted student follows the new style while preserving prompt alignment and coherent motion. This suggests that AFD is not limited to the physics prompt distribution used in the main experiment. It supports continual learning by using the teacher--student discriminator to provide feedback on the current target domain and the forward-process update to transfer this feedback to the student's denoising states, without teacher scores or architectural alignment.

\begin{figure}[t]
\centering
\begin{minipage}[t]{0.49\textwidth}
\centering
\includegraphics[width=\linewidth]{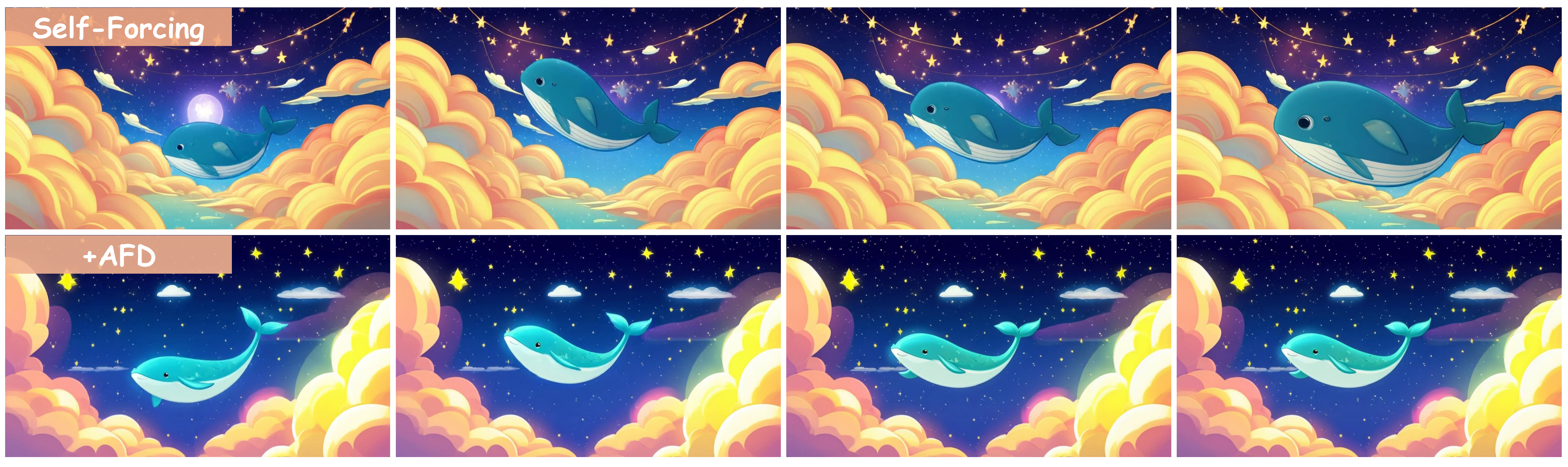}
\end{minipage}
\hfill
\begin{minipage}[t]{0.49\textwidth}
\centering
\includegraphics[width=\linewidth]{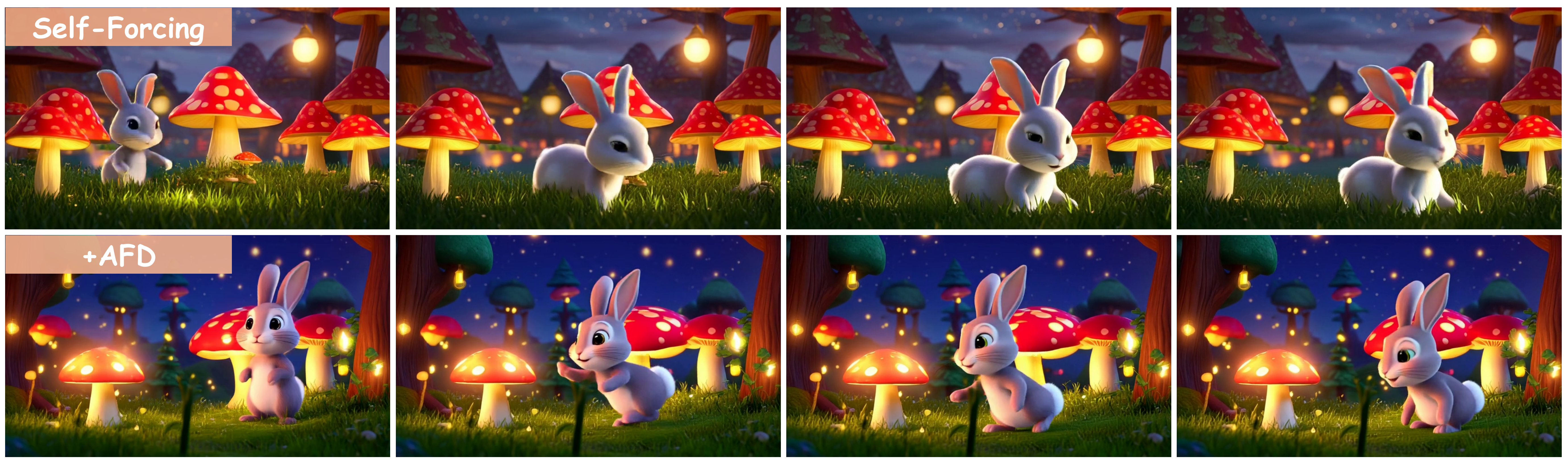}
\end{minipage}
\caption{Qualitative examples of AFD on a Disney-style animation dataset for continual learning. Detailed prompts are provided in Table~\ref{tab:prompts_anime_qualitative}.}
\label{fig:afd_anime_qualitative}
\vspace{-20pt}
\end{figure}

\textbf{Discriminator learning rate.}
Because AFD's supervision is generated by a co-evolving discriminator, the discriminator update rate $\eta_D$ directly controls how informative the student's reward signal is over training. We sweep $\eta_D\in\{0,\,1\!\times\!10^{-6},\,5\!\times\!10^{-6},\,1\!\times\!10^{-5},\,5\!\times\!10^{-5}\}$ with all other hyperparameters fixed, and track the discriminator reward $r_\phi$ on student rollouts.

Figure~\ref{fig:disc_lr_ablation} shows that both under-updating and over-updating the discriminator degrade the quality of the rollout reward. When $\eta_D\!\le\!1\!\times\!10^{-6}$, the discriminator lags behind the evolving student distribution: $r_\phi$ rapidly approaches $1$, indicating saturation of a stale scoring function rather than reliable reduction of the teacher--student gap. When $\eta_D=5\!\times\!10^{-5}$, the discriminator becomes overly strong early in training; $r_\phi$ remains near $0$ across the batch, reducing the contrast between positive and negative rollout weights and weakening the student update. Intermediate rates ($5\!\times\!10^{-6}$ and $1\!\times\!10^{-5}$) avoid these saturation regimes and produce a gradual increase in $r_\phi$, which is consistent with the intended role of $D_\phi$ as an adaptive estimator of the current teacher--student discrepancy rather than a fixed reward model.

\begin{wrapfigure}{r}{0.7\textwidth}
\centering
\vspace{-1.5em}
\includegraphics[width=0.68\textwidth]{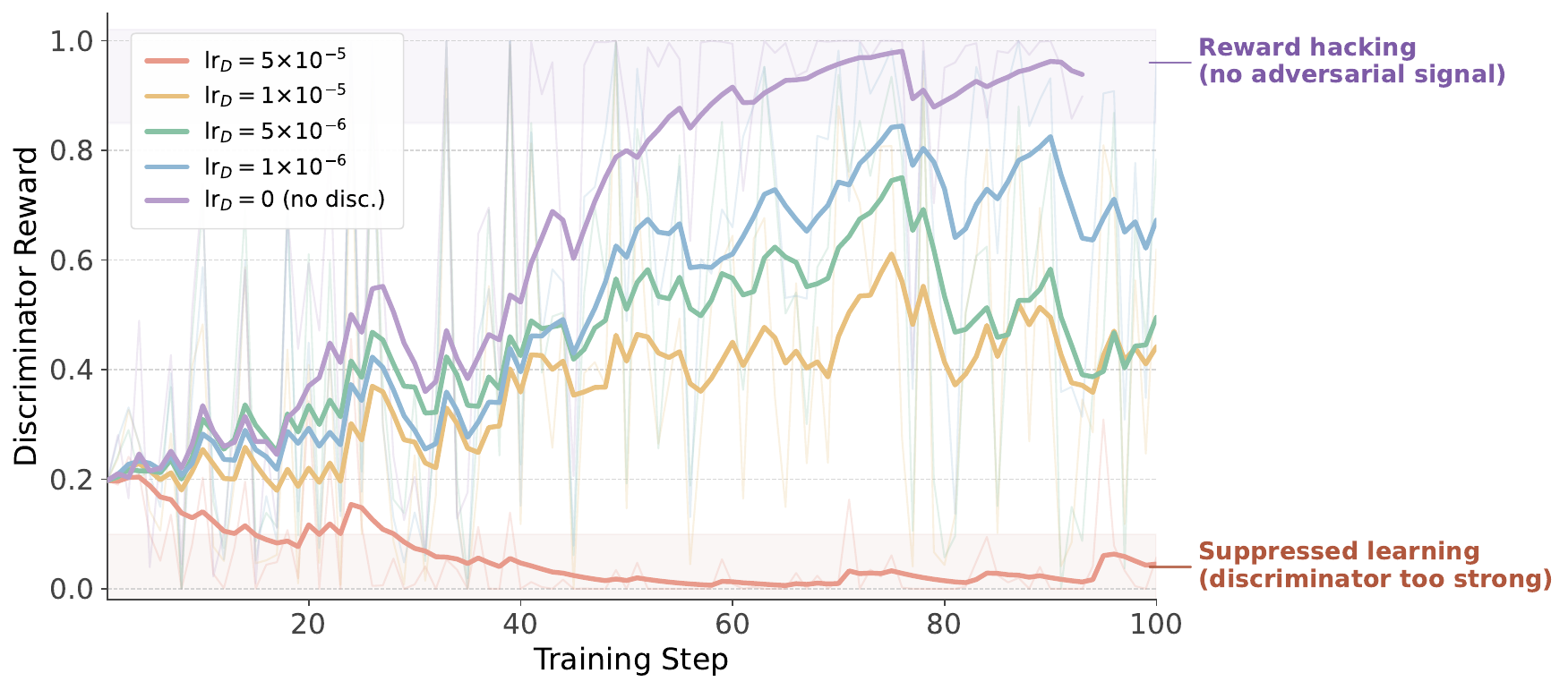}
\vspace{-0.5em}
\caption{Discriminator learning-rate ablation. The rollout reward $r_\phi$ is most informative at intermediate $\eta_D$; overly small or large update rates lead to reward hacking or suppressed learning.}
\label{fig:disc_lr_ablation}
\vspace{-1.0em}
\end{wrapfigure}

\textbf{Discriminator loss.}
We next replace the Bradley--Terry discriminator objective from Section~\ref{sec:method} with a GAN-style binary classification loss while keeping the same on-policy data, discriminator architecture, and forward-process student update. Figure~\ref{fig:vbench_bt_vs_gan} compares the resulting VBench motion and physics dimensions. BT improves dynamic degree by $9.72$ points over the GAN loss ($88.89$ vs.\ $79.17$) and slightly improves motion smoothness ($98.89$ vs.\ $98.53$), while temporal flickering, human action, and spatial relation remain effectively tied.

\begin{wrapfigure}{r}{0.50\textwidth}
\centering
\vspace{-1.0em}
\includegraphics[width=0.48\textwidth]{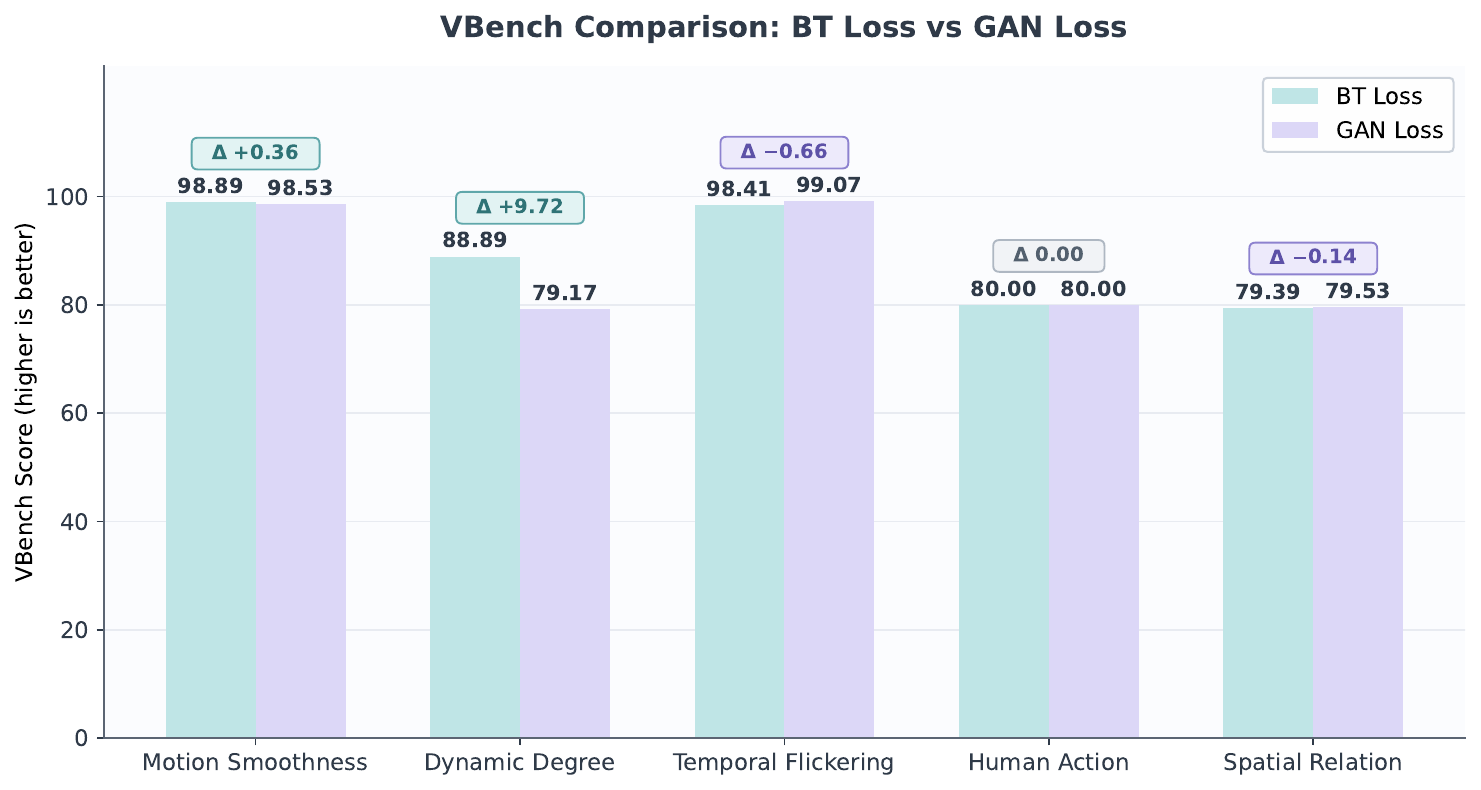}
\caption{VBench ablation of the discriminator loss. Replacing the prompt-paired Bradley--Terry objective with a GAN-style binary loss mainly reduces dynamic degree, while the other motion and physics dimensions remain close.}
\label{fig:vbench_bt_vs_gan}
\vspace{-1.0em}
\end{wrapfigure}

This pattern matches the role of the discriminator in AFD. The student update does not need a globally calibrated real/fake probability; it needs a reliable on-policy ranking signal for forming positive and negative rollouts in the forward-process objective. The BT loss compares teacher and student videos under the same prompt and optimizes the margin $D_\phi(x_0^T,y)-D_\phi(\hat{x}_0,y)$, so prompt difficulty, content category, and static appearance biases are largely cancelled inside each pair. A GAN-style loss instead trains an absolute classifier over teacher and student samples, which can be dominated by easy distributional cues and can saturate when the discriminator becomes too confident. BT therefore provides a smoother relative advantage for AFD, yielding stronger motion improvement while keeping the general video generation capability of the student.

\section{Conclusion}
AFD suggests that black-box video distillation should be formulated around the information interface available to the student, rather than around the teacher's hidden generation process. For causal AR students, the missing ingredient is not access to the teacher trajectory, but a principled way to connect observable sample-level discrepancies to the student's own denoising-time states. By separating these two roles into adaptive discrimination and forward-process credit assignment, AFD turns completed-video supervision into an on-policy training signal that improves motion and physical behavior without sacrificing general quality. This perspective offers a practical and extensible route for transferring capabilities from powerful sampling-only video models to efficient autoregressive generators.

\section{Limitations}
Our experiments focus on two causal autoregressive student backbones and a small set of target domains, mainly physics-oriented prompts and stylized animation prompts. Broader studies across more teachers, longer videos, and more diverse prompt distributions would further test the generality of AFD. In addition, AFD uses an online discriminator, so its performance can depend on discriminator update rate and reward calibration; our ablations provide initial guidance, but a more systematic tuning study is left for future work. Evaluation on broader reasoning-oriented benchmarks such as V-ReasonBench \citep{luo2025vreasonbenchunifiedreasoningbenchmark} is left for future work.

\bibliography{references}

\section{Hyperparameters}
\label{sec:hyperparameters}

We provide detailed hyperparameters in Table~\ref{tab:hyperparameters}. The table lists the model, optimization, diffusion, regularization, and rollout settings used for AFD training.

\begin{table}[h]
\centering
\caption{Comprehensive hyperparameters for AFD training. We detail the configurations used across model and video specifications, LoRA fine-tuning, optimization, diffusion process, selective regularization, and streaming rollout.}
\label{tab:hyperparameters}
\small
\setlength{\tabcolsep}{4pt}
\renewcommand{\arraystretch}{1.08}
\begin{tabular}{p{0.26\linewidth}p{0.34\linewidth}p{0.30\linewidth}}
\toprule
\textbf{Module} & \textbf{Hyperparameter} & \textbf{Value} \\
\midrule
\multirow{2}{*}{\textbf{Model \& Video Specs}}
& Base Architecture & Self-Forcing / Causal-Forcing \\
& Video Resolution ($H \times W$) & $480 \times 832$ \\
\midrule
\multirow{4}{*}{\textbf{LoRA Fine-Tuning}}
& Rank ($r$) & 256 \\
& Scaling Factor ($\alpha$) & 256 \\
& Dropout Rate & 0.0 \\
& Gradient Checkpointing & Enabled \\
\midrule
\multirow{7}{*}{\textbf{Optimization}}
& Precision Mode & bfloat16 \\
& Optimizer & AdamW ($\beta_1 = 0.9, \beta_2 = 0.999, \epsilon = 1e-8$) \\
& Learning Rate ($\eta$) & 1e-5 \\
& Weight Decay & 1e-4 \\
& Max Gradient Norm & 1.0 \\
\midrule
\multirow{6}{*}{\textbf{Selective Regularization}}
& Interpolation Strength ($\beta$) & 0.1 \\
& Prior Loss Weight ($\lambda_{\mathrm{prior}}$) & 1e-4 \\
& Advantage Clip Max & 5.0 \\
& Reward Normalization & Global Std with Per-Prompt Tracking \\
& EMA Decay Rate ($\gamma$) & 0.99 \\
\bottomrule
\end{tabular}
\end{table}



\section{Additional Qualitative Visualizations}
\label{sec:additional_qualitative}

We provide additional AFD visualizations in img~\ref{fig:afd_appendix_demo_2}--\ref{fig:afd_appendix_demo_7}. These examples further illustrate the same pattern observed in the main experiments: AFD improves motion evolution and physical plausibility while preserving the prompt-level semantics and visual fidelity of the base autoregressive video student.

\begin{figure}[p]
\centering
\includegraphics[width=\textwidth]{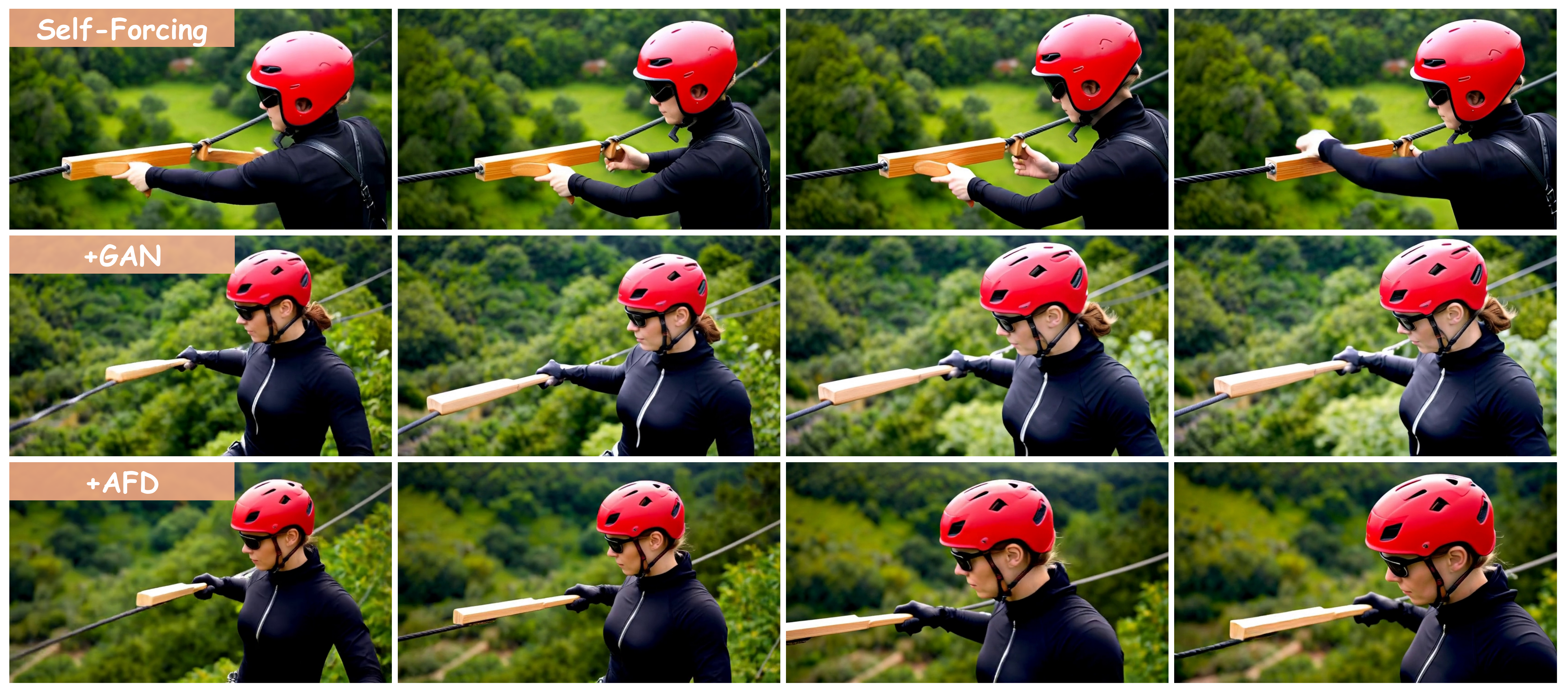}
\caption{Additional qualitative visualization of AFD. Prompt: A breathtaking tightrope walker, clad in a sleek black outfit and a striking red helmet, navigates a narrow tightrope suspended high above a lush green landscape. The camera captures a close-up of their focused face, framed by a dark visor, as they expertly manipulate a wooden block along the rope using a long, slender balance pole. The block glides smoothly, creating a subtle groove in the taut rope, a testament to their precision and control. The background blurs into a vibrant tapestry of greenery, enhancing the sense of isolation and danger. The tightrope, a thin line of tension, contrasts sharply with the serene environment, while the camera remains steady, allowing the viewer to fully immerse in the walker's intense concentration and the delicate dance of balance and skill.}
\label{fig:afd_appendix_demo_2}
\end{figure}

\begin{figure}[p]
\centering
\includegraphics[width=\textwidth]{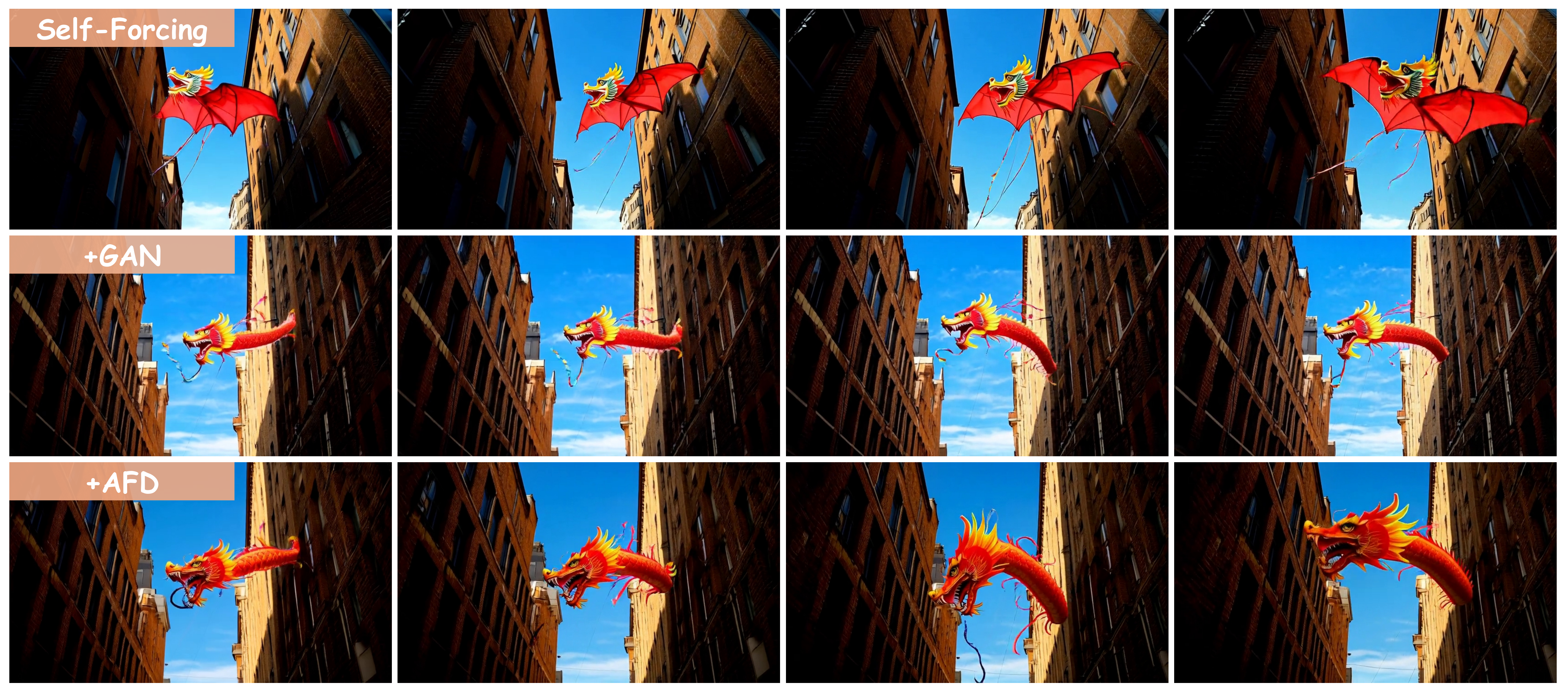}
\caption{Additional qualitative visualization of AFD. Prompt: A majestic dragon kite soars through a narrow urban canyon, framed by towering, weathered brick buildings. The kite, intricately crafted with a vibrant red body and a fierce, golden head, glides effortlessly against a backdrop of deep blue skies, its tail adorned with a cascade of colorful ribbons that flutter in the breeze. The camera captures the kite's dynamic flight, showcasing its graceful turns and the subtle interplay of light and shadow as it navigates the tight space between the buildings. The scene is enhanced by a soft, golden-hour glow, casting a warm hue over the textured brick facades, while the kite's movement creates a striking contrast against the static architecture. This cinematic moment celebrates the artistry of kite flying, blending the beauty of nature with the urban landscape.}
\label{fig:afd_appendix_demo_4}
\end{figure}

\begin{figure}[p]
\centering
\includegraphics[width=\textwidth]{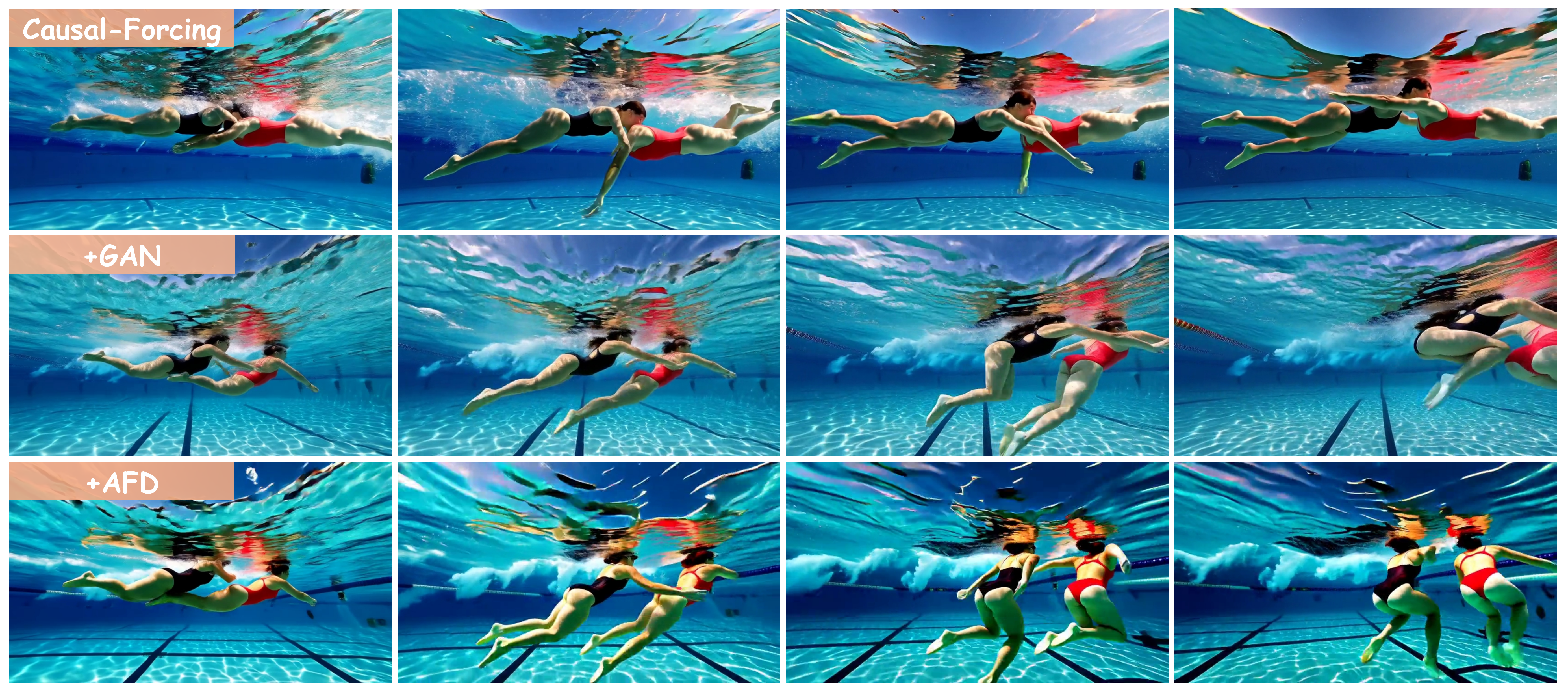}
\caption{Additional qualitative visualization of AFD. Prompt: In a breathtaking underwater tableau, two swimmers glide through the crystal-clear water, their synchronized breaststroke movements creating a mesmerizing dance of rhythm and grace. The camera, positioned at a slight angle, captures the swimmers' powerful legs and arms as they propel themselves forward, their bodies slicing through the water with precision. The first swimmer, clad in a sleek black swimsuit, leads the pack, while the second, in a vibrant red suit, trails closely, their synchronized movements echoing a fierce competition. The water's surface shimmers with sunlight, casting dynamic reflections that enhance the scene's ethereal quality. The static camera allows for a focused view of their synchronized strokes, highlighting the swimmers' strength and agility, while the serene underwater environment amplifies the beauty of their synchronized performance.}
\label{fig:afd_appendix_demo_7}
\end{figure}

\section{Prompts for Qualitative Visualizations}
\label{sec:visualization_prompts}

We present the prompts used for the qualitative examples in img~\ref{fig:afd_qualitative} and~\ref{fig:afd_anime_qualitative}.

\begin{table}[h]
\centering
\caption{Prompts used for Figure~\ref{fig:afd_qualitative}.}
\label{tab:prompts_main_qualitative}
\small
\setlength{\tabcolsep}{4pt}
\renewcommand{\arraystretch}{1.12}
\begin{tabular}{p{0.24\linewidth}p{0.70\linewidth}}
\toprule
\textbf{Example} & \textbf{Prompt} \\
\midrule
physics1 & In a breathtaking display of aquatic athleticism, a skilled skier glides effortlessly across the shimmering surface of a tranquil lake, propelled by a powerful ski jet. The scene opens with the skier, clad in a vibrant red and black wetsuit, gracefully poised on the water, their skis slicing through the glassy surface. The ski jet, a sleek black vessel with a striking red stripe, glides alongside, its engine humming with precision as it maintains a steady pace. The camera captures the dynamic interplay between the skier and the jet, showcasing the skier's agility and balance as they lean into the turns, their movements fluid and controlled. The backdrop features a serene landscape of lush greenery and distant mountains, enhancing the tranquil ambiance. As the skier gains momentum, the camera remains fixed, allowing viewers to appreciate the exhilarating dance of speed and skill on the water, culminating in a thrilling display of aquatic prowess. \\
physics2 & In a serene bathroom setting, a person clad in a light blue shirt and dark pants stands before a sleek white sink, the soft glow of natural light filtering through a nearby window. The camera captures a close-up of their hands, one gently squeezing a vibrant blue toothpaste tube, while the other holds a pristine white toothbrush, poised to receive the creamy substance. As the tube is squeezed, a steady stream of toothpaste flows onto the bristles, creating a mesmerizing cascade of blue that contrasts beautifully with the white brush. The scene is bathed in a warm, inviting light, enhancing the textures of the toothpaste and the brush, while the background remains softly blurred, drawing the viewer's focus to the meticulous act of oral hygiene. The static camera angle ensures a clear view of the process, capturing the rhythmic motion of the hands and the satisfying result of a perfectly coated toothbrush. \\
physics3 & In a serene, softly lit room, a pair of hands gracefully manipulates knitting needles, their rhythmic movements creating a mesmerizing dance of craftsmanship. The camera captures a close-up of the hands, framed against a blurred backdrop, enhancing the focus on the intricate process. The needles glisten under the warm, diffused light, casting gentle shadows that accentuate the texture of the yarn. As the hands deftly weave the yarn, the knitted piece grows visibly longer, each stitch a testament to the skill and patience of the unseen artisan. The static camera angle allows for an intimate exploration of the technique, while the subtle color grading enhances the cozy atmosphere, inviting viewers to immerse themselves in the artistry of knitting. \\
physics4 & A vibrant kite, adorned with whimsical cartoon characters, soars against a backdrop of lush greenery and a clear blue sky, its colorful design capturing the essence of childhood joy. The kite, featuring a playful character with a wide smile and a distinctive hat, is held by a young child in a bright yellow shirt, who eagerly runs in a circular motion, their laughter echoing through the air. The camera, positioned at a low angle, captures the dynamic interplay between the child's enthusiasm and the kite's graceful ascent, as it dances on the breeze. The scene is bathed in soft, natural light, enhancing the vivid hues of the kite and the child's clothing, while the gentle rustle of leaves and the distant hum of nature create an immersive atmosphere. This moment encapsulates the simple pleasures of outdoor play, inviting viewers to share in the exhilaration of flight and the boundless energy of youth.r \\
\bottomrule
\end{tabular}
\end{table}

\begin{table}[h]
\centering
\caption{Prompts used for Figure~\ref{fig:afd_anime_qualitative}.}
\label{tab:prompts_anime_qualitative}
\small
\setlength{\tabcolsep}{4pt}
\renewcommand{\arraystretch}{1.12}
\begin{tabular}{p{0.24\linewidth}p{0.70\linewidth}}
\toprule
\textbf{Example} & \textbf{Prompt} \\
\midrule
anime1 & A brave cartoon rabbit exploring a glowing mushroom forest at dusk, expressive 3D cartoon character, magical storybook atmosphere, warm cinematic lighting, whimsical fantasy environment, soft polished textures, original fairy-tale-inspired animation style. \\
anime2 & A cartoon whale floating through a dreamy sky filled with stars and clouds, whimsical fantasy sequence, polished 3D cartoon family-animation look, cinematic soft glow, original fairy-tale-inspired animation style. \\
\bottomrule
\end{tabular}
\end{table}

\section{Theoretical Insights}
\label{sec:appendix_theory}
In this section, we provide theoretical analysis on the proposed method Adversarial Flow Distillation (AFD). We reveal the connection between policy optimization in AFD to the KL-divergence on-policy distillation from teacher.

Let $\pi_T(x_0 | y)$ denote the teacher distribution accessible only through clean samples, and $\pi_\theta(x_0 | y)$ the student parameterized by velocity field $v_\theta(x_t, y, t)$. Let $v^{\text{old}}(x_t, y, t)$ denote the EMA data-collection policy, with induced distribution $\pi^{\text{old}}$. The forward process is
\begin{equation}
x_t = \alpha_t x_0 + \sigma_t \epsilon, \qquad \epsilon \sim \mathcal{N}(0, I),
\end{equation}
with sample-level forward velocity
\begin{equation}
v(x_0, \epsilon, t) = \dot\alpha_t x_0 + \dot\sigma_t \epsilon.
\end{equation}

The discriminator $D_\phi(\hat{x_0}, y) \in (0,1)$ is trained to distinguish teacher samples $x_0^T \sim \pi_T$ from student rollouts $\hat x_0 \sim \pi_\theta$, with optimum
\begin{equation}
D^*_\phi(\hat{x_0}, y) = \frac{\pi_T(\hat{x_0} | y)}{\pi_T(\hat{x_0} | y) + \pi_\theta(\hat{x_0} | y)}, \qquad \rho^*_\phi(\hat{x_0}, y) := \log\frac{D^*_\phi(\hat{x_0}, y)}{1 - D^*_\phi(\hat{x_0}, y)} = \log\frac{\pi_T(\hat{x_0}| y)}{\pi_\theta(\hat{x_0}| y)}.
\end{equation}

In the AFD setting, if we set advantage-form reward dependent on the density ratio estimated by discriminator:
\begin{equation}
r_\phi(x_0, y) := \exp \big(  \rho_\phi(x_0, y) \big).
\end{equation}

Recall from DiffusionNFT Section~3.1:
\begin{equation}
\pi^+(x_0 | c) := \pi^{\text{old}}(x_0 | o = 1, c) = \frac{r(x_0, c)}{p_{\pi^{\text{old}}}(o=1 | c)}\, \pi^{\text{old}}(x_0 | c).
\end{equation}
This is the conditional distribution of $x_0$ given that the optimality variable $o$ equals $1$ --- i.e., the reward-reweighted version of $\pi^{\text{old}}$ that puts more mass on high-reward samples. The tilted policy can also be written as
\begin{equation}
\pi^+(x_0 | y) \propto \exp \big(  \rho_\phi  \big) \cdot \pi^{\text{old}}(x_0 | y),
\end{equation}
and at optimal $\phi^*$ due to the on-policy setting $\pi^{\text{old}} = \pi_\theta$, this reduces to
\begin{equation}
\pi^+(x_0 | y) \propto \pi_T(x_0).
\label{eq:pi_optimal}
\end{equation}
Equation \ref{eq:pi_optimal} demonstrates that the tilted policy $\pi^+$ under the optimal discriminator recovers the teacher policy. In the following proposition, we show that the policy optimization process in AFD pushes the policy towards the teacher model.
\begin{proposition}
Under an optimal discriminator, the policy improvement step from
$\pi^{\mathrm{old}}$ to the reward-tilted distribution $\pi^+$ is
equivalent to on-policy reverse-KL distillation from the teacher $\pi_T$.
\end{proposition}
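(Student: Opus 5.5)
The plan is to express the policy-improvement step as a variational problem over $\pi$ and identify its objective with the reverse KL to the teacher. I will follow the KL-regularized reward-maximization view underlying DiffusionNFT: for a fixed reward $\rho$ and reference $\pi^{\mathrm{old}}$, the tilted law $\pi^+\propto \exp(\rho)\,\pi^{\mathrm{old}}$ is the unique maximizer of
\begin{equation*}
J(\pi) := \mathbb{E}_{x_0\sim\pi(\cdot|y)}\big[\rho_\phi(x_0,y)\big] - \mathrm{KL}\big(\pi(\cdot|y)\,\|\,\pi^{\mathrm{old}}(\cdot|y)\big).
\end{equation*}
Step one is to verify this Gibbs variational identity. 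Write $J(\pi) = -\mathrm{KL}(\pi\|\pi^+) + \log Z(y)$, where $Z(y)=\mathbb{E}_{\pi^{\mathrm{old}}}[e^{\rho_\phi}]$; this follows by expanding the log-density of $\pi^+$.

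Step two substitutes the optimal discriminator. From the earlier display, $\rho^*_\phi=\log(\pi_T/\pi_\theta)$, and in the on-policy setting $\pi^{\mathrm{old}}=\pi_\theta$. Then $Z(y)=\int \pi_T = 1$, which recovers Equation~\ref{eq:pi_optimal} with the normalizing constant equal to one. It also gives
\begin{equation*}
J(\pi) = \mathbb{E}_{\pi}\Big[\log\tfrac{\pi_T}{\pi^{\mathrm{old}}}\Big] - \mathbb{E}_{\pi}\Big[\log\tfrac{\pi}{\pi^{\mathrm{old}}}\Big] = -\mathrm{KL}\big(\pi(\cdot|y)\,\|\,\pi_T(\cdot|y)\big).
\end{equation*}
The $\pi^{\mathrm{old}}$ terms cancel identically. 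Maximizing the NFT policy-improvement objective is therefore exactly minimizing the reverse KL $\mathrm{KL}(\pi\|\pi_T)$, and its maximizer is $\pi^+=\pi_T$.

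Step three addresses ``on-policy'' and the gradient. At $\pi=\pi_\theta=\pi^{\mathrm{old}}$, differentiate the reverse KL under the pathwise/score-function identity, using that $\mathbb{E}_{\pi_\theta}[\nabla_\theta\log\pi_\theta]=0$:
\begin{equation*}
\nabla_\theta \mathrm{KL}(\pi_\theta\|\pi_T) = -\mathbb{E}_{\pi_\theta}\big[\rho^*_\phi\,\nabla_\theta\log\pi_\theta\big].
\end{equation*}
This is the same first-order update as ascending the tilted objective with reward $\rho^*_\phi$ evaluated on student samples, which is the sense in which LLM-style OPD is ``on-policy reverse KL.'' Finally, I will invoke the forward-process bridge of Section~\ref{sec:theory}. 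The NFT loss, at its optimum, drives $v_\theta$ to $v^+$, the marginal velocity of the noised $\pi^+$. Under the optimal discriminator $\pi^+=\pi_T$, so $v^+$ is the teacher's forward-process velocity field. Hence the flow-level step realizes the distribution-level step.

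The main obstacle is step two's reliance on exact equalities. The ratio $\pi_T/\pi_\theta$ must be well defined, which needs absolute continuity $\pi_T\ll\pi_\theta$ and an integrable $Z$. The paper's actual reward is a clipped, batch-normalized $w_i\in[0,1]$ rather than $e^{\rho}$. I would first state the result for the idealized reward $r=e^{\rho_\phi}$ with $\pi^{\mathrm{old}}=\pi_\theta$ under a support assumption. Then I would remark that a monotone normalization preserves the ranking but not the exact KL identity.
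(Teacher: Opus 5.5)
Your Steps 1--2 are essentially the paper's proof. The paper likewise characterizes $\pi^+$ as the maximizer of $\mathbb{E}_{\pi}[\rho]-\mathrm{KL}(\pi\,\|\,\pi^{\mathrm{old}})$, substitutes $\rho^*=\log(\pi_T/\pi_\theta)$ with $\pi^{\mathrm{old}}=\pi_\theta$, and cancels the $\pi^{\mathrm{old}}$ terms to reach $\max_\pi -\mathrm{KL}(\pi\,\|\,\pi_T)$. You additionally prove the Gibbs identity, which the paper only asserts, and you make explicit the normalization $Z=1$ and the support condition $\pi_T\ll\pi_\theta$.

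Step 4 is not needed for the statement, and it overstates what the NFT loss achieves. With $v^{\mathrm{old}}$ frozen, the pointwise optimum of the NFT loss is $v^{\mathrm{old}}+\tfrac{2}{\beta}\,\mathbb{E}[r\mid x_t]\,(v^+-v^{\mathrm{old}})$, which points toward $v^+$ but does not equal it in general, so ``the flow-level step realizes the distribution-level step'' should be weakened accordingly.
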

\begin{proof}
The tilted policy $\pi^+$ is the solution to the optimization problem:
\begin{align}
\max_{\pi}\; \mathbb{E}_{x_0 \sim \pi}[\rho(x_0, y)] - \mathrm{KL}\!\left(\pi \,\|\, \pi^{\mathrm{old}}\right).
\end{align}
Plugging the optimal discriminator into $\rho$, we obtain that the optimization problem becomes
\begin{align}
&\max_{\pi}\; \mathbb{E}_{x_0 \sim \pi}\!\left[
\log \frac{\pi_T(x_0)}{\pi_\theta(x_0)}
\right]
-
\mathrm{KL}\! \left(\pi \,\|\, \pi^{\mathrm{old}}\right) \\
\Longleftrightarrow &\max_{\pi}\; -\mathrm{KL}\!\left(\pi \,\|\, \pi_T\right) + \mathrm{KL}\!\left(\pi \,\|\, \pi_\theta \right)
-
\mathrm{KL}\!\left(\pi \,\|\, \pi^{\mathrm{old}}\right) \\
\Longleftrightarrow &\max_{\pi}\; -\mathrm{KL}\!\left(\pi \,\|\, \pi_T\right)
\label{eq:todistill}
\end{align}
Equation \ref{eq:todistill} finally becomes reverse-KL distillation from the teacher model.
\end{proof}

\end{document}